\def\eqref#1{equation~\ref{#1}}
\def\floor#1{\lfloor #1 \rfloor}
\def\1{\bm{1}}
\DeclareMathAlphabet{\mathsfit}{\encodingdefault}{\sfdefault}{m}{sl}
\SetMathAlphabet{\mathsfit}{bold}{\encodingdefault}{\sfdefault}{bx}{n}
\def\A{{\mathcal{A}}}
\newcommand{\E}{\mathbb{E}}
\theoremstyle{plain}
\newtheorem{theorem}{Theorem}[section]
\newtheorem{proposition}[theorem]{Proposition}
\newtheorem{lemma}[theorem]{Lemma}
\newtheorem{corollary}[theorem]{Corollary}
\theoremstyle{definition}
\newtheorem{definition}[theorem]{Definition}
\theoremstyle{remark}
\def\E{\mathbb{E}}
\def\X{\mathcal{X}}
\def\Y{\mathcal{Y}}
\def\ddefloop#1{\ifx\ddefloop#1\else\ddef{#1}\expandafter\ddefloop\fi}
\def\ddef#1{\expandafter\def\csname bb#1\endcsname{\ensuremath{\mathbb{#1}}}}
\def\ddef#1{\expandafter\def\csname c#1\endcsname{\ensuremath{\mathcal{#1}}}}
\def\ddef#1{\expandafter\def\csname v#1\endcsname{\ensuremath{\boldsymbol{#1}}}}
\def\E{\mathbb{E}}
\def\X{\mathcal{X}}
\def\Y{\mathcal{Y}}
\def\F{\mathcal{F}}
\def\G{\mathcal{G}}
\def\1{\mathds{1}}
\newif\iffeedback
\newcommand{\varun}[1]{{\color{red}(Varun: #1)}}
\newcommand{\hari}[1]{{\color{olive}(Hari: #1)}}
\newcommand{\ali}[1]{{\color{green}(Ali: #1)}}
\newcommand{\aliq}[1]{{\color{blue}(Ali Q: #1)}}
\newcommand{\varun}[1]{}
\newcommand{\hari}[1]{}
\newcommand{\ali}[1]{}
\newcommand{\aliq}[1]{}
\setlist[description]{leftmargin=0in,labelindent=0in, font=\normalfont\bfseries}
\title{LOTOS: Layer-wise Orthogonalization for Training Robust Ensembles}
\author{
  Ali Ebrahimpour-Boroojeny \\
  UIUC \\
  \texttt{ae20@illinois.edu} \\
  %% examples of more authors
   \And
  Hari Sundaram \\
  UIUC \\
  \texttt{hs1@illinois.edu}
    \And
  Varun Chandrasekaran \\
  UIUC \\
  \texttt{varunc@illinois.edu}
}
\begin{document}

\maketitle
 
\begin{abstract}
Transferability of adversarial examples is a well-known property that endangers all classification models, even those that are only accessible through black-box queries. Prior work has shown that an ensemble of models is more resilient to transferability: the probability that an adversarial example is effective against most models of the ensemble is low. Thus, most ongoing research focuses on improving ensemble diversity. Another line of prior work has shown that Lipschitz continuity of the models can make models more robust since it limits how a model's output changes with small input perturbations. {\em In this paper, we study the effect of Lipschitz continuity on transferability rates.} We show that although a lower Lipschitz constant increases the robustness of a single model, it is not as beneficial in training robust ensembles as it increases the transferability rate of adversarial examples across models in the ensemble. Therefore, we introduce \texttt{LOTOS}, a new training paradigm for ensembles, which counteracts this adverse effect. It does so by promoting orthogonality among the top-$k$ sub-spaces of the transformations of the corresponding affine layers of any pair of models in the ensemble. We theoretically show that $k$ does not need to be large for convolutional layers, which makes the computational overhead negligible. Through various experiments, we show \texttt{LOTOS} increases the robust accuracy of ensembles of ResNet-18 models by $6$ percentage points (p.p) against black-box attacks on CIFAR-10. It is also capable of combining with the robustness of prior state-of-the-art methods for training robust ensembles to enhance their robust accuracy by $10.7$ p.p.

\end{abstract}

\section{Introduction}
\label{sec:intro}

Deep learning models are very brittle to input changes:~\citet{szegedy2013intriguing} were the first to carefully craft ``adversarial'' perturbations that result in incorrect classification outcomes. Subsequent research showed that these adversarial examples ``transfer'' to models with different hyper-parameters, and even different hypothesis classes~\citep{papernot2016transferability, liu2016delving}. This transferability property was used to design black-box attacks against models for which only query access is available~\citep{papernot2017practical}. In these attacks, the adversary trains a local model that is ``similar'' to the victim (black-box) model and uses that to find transferable adversarial examples~\citep{xiao2018generating}. To increase the rate of transfer success, a common strategy is to choose those inputs that are adversarial to an ``ensemble'' of models~\citep{liu2016delving,chen2023rethinking}: fooling an ensemble may result in fooling a potentially unseen model. 

Research has also been carried out on understanding and improving model resilience to such attacks~\citep{dong2019evading}. Adversarial robustness is the innate ability of the model to correctly classify adversarial examples~\citep{madry2017towards}. One way of increasing this robustness is utilizing a diverse\footnote{In terms of the parameters and decision boundaries.} ensemble of models~\citep{yang2020dverge,yang2021trs}. This has also been considered as a mitigation to the transferability problem~\citep{pang2019improving,kariyappa2019improving,yang2020dverge,yang2021trs,sitawarin2023defending}; by increasing the diversity among models of the ensemble, the subspace of the adversarial examples that are effective against most of the models within the ensemble shrinks. While empirical evidence is promising, most of these ensemble robustness methods are either computationally expensive or come at a considerable cost to the accuracy of the models and the overall ensemble. In another vein of research, Lipschitz continuity was also shown to be important for robustness~\citep{szegedy2013intriguing,farnia2018generalizable, boroojeny2024spectrum}. Since the model's Lipschitz continuity controls how the predictions change for small changes in the input, it is intuitive that bounding it will improve robustness. It is important to note that neural networks are a composition of multiple layers and therefore the existing works in this area obtain an upper-bound on the overall Lipschitz constant of the network by bounding each layer independently~\citep{szegedy2013intriguing,sedghi2018singular,senderovich2022towards,delattre2023efficient,boroojeny2024spectrum}.

In our work, {\em we investigate the effect of Lipschitz continuity on the transferability of adversarial examples}. We observe that while decreasing the Lipschitz constant makes each model of the ensemble {\em individually more robust}, it makes them less diverse and consequently {\em increases the transferability rate} among them which in turn {\em hinders the overall ensemble robustness} (\S~\ref{sec:motivation},~\Cref{fig:vanilla_compare}). To resolve this adverse effect, we introduce our novel training paradigm, \textbf{Layer-wise Orthogonalization for Training rObust enSembles (\texttt{LOTOS})}, which orthogonalizes the corresponding affine layers of the models with respect to one another. This increases the diversity of the trained models. \texttt{LOTOS} can be combined with any prior method of training diverse ensembles to further boost their robustness.

Through extensive experiments and ablation studies, we show that \texttt{LOTOS} effectively decreases the transferability rate among the models of an ensemble, which leads to a higher robustness against black-box attacks. As we will show, \texttt{LOTOS} is highly efficient for convolutional layers and is very fast compared to prior methods. We also show that it is an effective method for training robust ensembles of heterogeneous architectures, where other state-of-the-art (SOTA) methods are not applicable. Finally, we investigate how \texttt{LOTOS} is able to improve the results of the prior SOTA methods when they are combined.
    
In short, the main contributions of this work are:

\begin{description}
    \item[Lipschitz continuity is not as effective for ensemble robustness.] To the best of our knowledge, we are the first to study the adverse effect of Lipschitz continuity on the transferability rate of adversarial examples (\S~\ref{sec:motivation}). Prior works study individual model and ensemble robustness separately which does not reveal the issues that the former might cause when used in the form of an ensemble. We show the presence of a trade-off between single model robustness and ensemble robustness as the Lipschitz constant of the individual models changes through empirical analysis. This shows the necessity of special treatment when training ensembles to alleviate this trade-off while benefiting the proven effectiveness of Lipschitz continuity in the robustness of individual models.

    \item[A new ensemble training method for robustness.] We introduce \texttt{LOTOS}, a novel ensemble training method to address the aforementioned trade-off. Prior work diversifies the models in the ensemble using either the final outputs~\citep{kariyappa2019improving,pang2019improving}, distilled features~\citep{yang2020dverge}, latent representations~\citep{huang2023fasten}, or the vectorized form of all the parameters~\citep{yang2021trs}. \texttt{LOTOS} considers the corresponding affine layers of the models and orthogonalizes them with respect to one another using a novel component in the loss function. Additionally, we theoretically and empirically show that our method is highly efficient for convolutional layers (\S~\ref{sec:LOTOS}). We demonstrate that \texttt{LOTOS} improves ensemble robustness against black-box attacks at a nominal increase in robust accuracy (in some cases). Also, it can be combined with prior SOTA in training robust models and ensembles to further boost their robustness (\S~\ref{sec:results}).
  
\end{description}

\section{Related Work\vspace{-3mm}}
\label{sec:related}

To increase the diversity of the models in the ensemble,~\citet{kariyappa2019improving} proposed misalignment of the gradient vectors with respect to the inputs. The intuition behind this idea is how several earlier works generate adversarial examples~\cite{goodfellow2014explaining,kurakin2018adversarial,papernot2016limitations}: these methods use the gradient direction with respect to a given input to find the direction that increases the loss function the most. By moving a small amount toward that direction, examples that are similar to the original example but are misclassified by the model are found. In their paper,~\citet{kariyappa2019improving} hypothesize that when this gradient direction is the same for various models of the ensemble, the common subspace of their adversarial examples will be larger because the loss function behaves similarly around the original data. They use cosine similarity to capture this similarity in the direction; by incorporating that for pairs of models in the loss function, they diversify the models in the ensemble to make it more robust.

\citet{pang2019improving} propose a regularizer to increase the diversity in an ensemble by increasing the entropy in non-maximal predictions.~\citet{yang2020dverge} use an adversarial training objective to increase the diversity of the ensemble by making the non-robust features more diverse.~\citet{yang2021trs} suggest that not only does the misalignment of the gradient vectors of the loss with respect to the inputs matters, but also the Lipschitz constant of the gradients (not parameters) of the loss with respect to the inputs has to decrease and propose a heuristic method to achieve this, which outperforms the prior methods.~\citet{zhang2022building} propose a method based on margin-boosting to diversify the models of an ensemble which did not achieve better results than the prior state-of-the-art methods~\citep{yang2020dverge,yang2021trs}. More recent works try to enhance the robustness of the existing methods by incorporating adversarial training against a variety of publicly available models~\citep{sitawarin2023defending} or enhance their time complexity by using faster (but weaker) attacks for data augmentation and enforcing diversity in the latent space~\citep{huang2023fasten}.

There is a different line of work on the orthogonality of the affine layers of the deep learning models which focuses on making the transformation of a single layer orthogonal (i.e., its rows and columns become orthonormal vectors). This helps to preserve the gradient norm of the layer and has been shown to improve the stability and robustness of the models~\citep{trockman2021orthogonalizing,singla2021skew,xu2022lot,singla2021improved,prach2022almost,hu2023recipe}. This notion of orthogonality is different from what we consider in this work; we wish to make the transformation of ``corresponding layers'' from different models {\em orthogonal with respect to each other}.

% \newpage
\section{Motivation}
\label{sec:motivation}

%In this section, w
We start with introducing the notations. Then, we define the notion of transferability rate (\S~\ref{sec:defs}) and continue with introducing our conjecture about the effect of smoothness (via Lipschitzness) on transferability rate (\S~\ref{sec:conjecture}).

\textbf{Notation.} Given the domain of inputs $\mathcal{X}$ and $m$ classes $\mathcal{Y} = \{1,2,\dots,m\}$, we consider a multi-class classifier $\mathcal{F}:\mathcal{X} \rightarrow \mathcal{Y}$ and its corresponding prediction function $f(x)$ which outputs the probabilities corresponding to each class (e.g., the outputs of the softmax layer in a neural network). The loss function for model $\mathcal{F}$ is denoted $\ell_\mathcal{F}: \mathcal{X} \times \mathcal{Y} \rightarrow \mathbb{R_+}$; it uses the predicted scores from $f(x)$ to compute the loss given the true label $y$ (e.g., cross-entropy loss). The population loss for model $\mathcal{F}$, which we may also refer to as risk, is defined as $R_\F(x,y) = \E_x [\ell_\mathcal{F}(x,y)$]. When the models are deep neural networks, we refer to a specific layer using superscripts (e.g., $f^{(i)}$ for the $i$-th layer of deep network). A funtion $f(x)$ is $L$-Lipschitz if $\| f(x) - f(x^\prime)\|_2 \leq L \| x-x^\prime \|_2, \forall x,x^\prime \in \mathcal{X}$. For a matrix $A$, the spectral norm is defined as $\|A\|_2 = \mathrm{sup}_{x\neq 0} \frac{\|Ax\|_2}{\|x\|_2}$. Any affine layer $i$ (e.g., dense layer, convolutional layer) with transformation matrix $A$ is $\|A\|_2$-Lipschitz. When we say layer $i$ (with transformation matrix $A$) has been clipped to a value $C$, this means we ensure $\|A\|_2 \simeq C$.

\subsection{Definitions}
\label{sec:defs}

We begin with the definition of an adversarial attack and then formally define the \textit{transferability rate ($T_{rate}$)} of adversarial examples.

\begin{definition}[Attack Algorithm]
    For a given input/output pair $(x,y) \in \mathcal{X} \times \mathcal{Y}$, a model $\mathcal{F}$, and a positive value $\epsilon$, a targetted attack algorithm $\A_\mathcal{F}^{(t)}(x) = x+\delta_x$ minimizes $\ell_\F(x+\delta_x,y_t)$ such that $\|\delta_x\|_2 \leq \epsilon$. An untargeted attack $\A(x)$ maximizes $\ell_\F(x+\delta_x,y)$. 
\end{definition}

\begin{definition}[Transferability Rate]
\label{def:trans}
For an untargeted adversarial algorithm $\A_\mathcal{F}$ and input space $\mathcal{X}$, we define the transferability rate ($T_{rate}$) of $\A_\mathcal{F}(x)$ from $\mathcal{F}$ to another classifier $\mathcal{G}$, as the following conditional probability:
\begin{equation}
T_{rate}(\A_\mathcal{F}, \mathcal{F}, \mathcal{G}) = \mathbb{P}_{(x,y)\in \mathcal{X} \times \mathcal{Y}} \left[\mathcal{G}(\A_\mathcal{F}(x)) \neq y \mid \; \mathcal{F}(x) = \mathcal{G}(x) = y \land \mathcal{F}(\A_\mathcal{F}(x)) \neq y \right].
\end{equation}

For the transferability of a targeted attack algorithm $\A_\mathcal{F}^{(t)}$, and target class $y_t$ this definition is:

\begin{equation}
T_{rate}(\A_\mathcal{F}^{(t)}, \mathcal{F}, \mathcal{G}) = \mathbb{P}_{(x,y)\in \mathcal{X} \times \mathcal{Y}} \left[\mathcal{G}(\A_\mathcal{F}^{(t)}(x)) = y_t \mid \; \mathcal{F}(x) = \mathcal{G}(x) = y \land \mathcal{F}(\A_\mathcal{F}^{(t)}(x)) = y_t \right].
\end{equation}
\end{definition}

Note that~\citet{yang2021trs} have similar definitions of transferability except they use joint probabilities instead of conditional probabilities. Using the conditional probability, we can better isolate the ``transferability'' property we are interested in. By using the joint probability, the $T_{rate}$ will depend on the accuracy of the two models and also the performance of the attack algorithm on the source model $\mathcal{F}$. This will not allow us to have an accurate comparison of the $T_{rate}$ between different settings.

Prior work has shown that there is a trade-off between the accuracy of two models and the $T_{rate}$ of adversarial examples between them~\citep{yang2021trs}; as the two models become more accurate, their decision boundaries become more similar and this increases the probability that an adversarial example generated for one of the models transfers to the other model because of the similarity of their margins around the source sample. In the following section, we introduce our conjecture on the trade-off between the Lipschitzness of the models and their $T_{rate}$.

\subsection{Our Conjecture: Lipschitz Continuity Influences Transferability}
\label{sec:conjecture}

Prior works~\citep{szegedy2013intriguing,farnia2018generalizable,boroojeny2024spectrum} highlight the importance of Lipschitzness on the robustness of a model against adversarial examples. To enforce the Lipschitzness of the deep neural networks, these works enforce the Lipschitzness on each individual layer: since the Lipschitz constant of the composition of all the components of the model (layers, activation functions, etc.) is upper bounded by the multiplication of their Lipschitz constants, this leads to a bound on the Lipschitzness of the whole model. In this work, we follow the same procedure to control the upper bound on the Lipschitz constant of a model. For this, we use FastClip~\citep{boroojeny2024spectrum} which is the current SOTA method for controlling the spectral norm of dense layers and convolutional layers. We use the chosen spectral norm for each individual layer to represent these models in the figures and tables. For example, $C=1$ shows that the spectral norm of all the dense layers and convolutional layers have been clipped to $1$, which makes each of them $1$-Lipschitz. For many architectures such as ResNet-18~\citep{he2016deep}, and DLA~\citep{yu2018deep} this effectively controls the Lipschitz constant of the model because the other constituent components (e.g., ReLU activation, max-pooling, softmax) are $1$-Lipschitz~\citep{goodfellow2014explaining}. For details on controlling the effect of batch norm layers, see~\Cref{sec:bn}.

The prior works have shown that by decreasing the value of the Lipschitz constant $L$ for a model, adversarial attacks achieve a lower success rate. This confirms that decreasing the Lipschitz constant for a model makes it more robust to input perturbations. However, the situation is complicated in an ensemble. While a lower Lipschitz constant promotes robustness for each of the models (of an ensemble), we conjecture that it will increase the $T_{rate}$ by making the classification boundaries of the models more similar. We formalize this intuition below:

\begin{figure}[t!]
\centering
    \includegraphics[width=.99\linewidth]{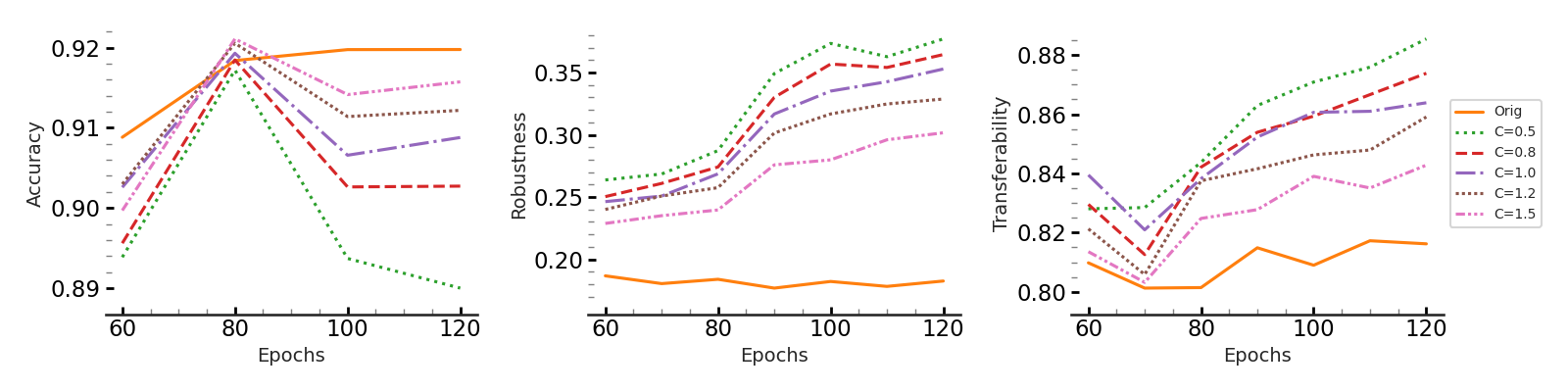} 
\caption{\footnotesize {\bf Accuracy vs. Robust Accuracy vs. Transferability:} Changes in the average accuracy and robust accuracy of \textit{individual} ResNet-18 models, along with the {\em average transferability rate between any pair of the models in each ensemble} as the layer-wise clipping value (spectral norm) changes. As the plots show, although the robustness of \textit{individual} models increases with decreasing the clipping value, the transferability rate among the models increases, which might forfeit the benefits of the clipping in the robustness of the whole ensemble.}
\vspace{-4mm}
\label{fig:vanilla_compare}
\end{figure}

\begin{proposition} 
\label{prop:trans}
    
    Assume $\X = [0,1]^d$ and $\|\delta_x\| \leq \epsilon$. For two models $\F$ and $\G$, if the loss function on both for any $y \in \Y$ is $L$-Lipschitz with respect to the inputs, we have the following inequality:

%\label{prep-trans}
\begin{align*}
     | R_\F(\A_\mathcal{F}(x),y) - R_\G(\A_\mathcal{F}(x),y) | \leq 2L\epsilon + |R_\F(x,y) - R_\G(x,y)|.
    %&\mathrm{b)} \,\,\,\, | R_\F(A(x),y) - R_\G(A(x),y) | \leq |\ell_\mathcal{F}(x^\prime,y) - \ell_\mathcal{G}(x^\prime,y)| + 2L(\sqrt{d} + r),
\end{align*}

\end{proposition}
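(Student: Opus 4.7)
The plan is to reduce the claim to a triangle-inequality calculation and then invoke the Lipschitz hypothesis twice. Let $x' = \A_\F(x)$, so that by definition of an attack algorithm $\|x' - x\|_2 \leq \epsilon$. The only hypotheses available are (i) that $\ell_\F(\cdot, y)$ and $\ell_\G(\cdot, y)$ are each $L$-Lipschitz in their first argument, and (ii) an implicit control on the gap $|R_\F(x,y) - R_\G(x,y)|$ at the clean point $x$ that appears on the right-hand side. The natural move is therefore to interpolate between $R_\F(x', y)$ and $R_\G(x', y)$ through the clean point $x$.

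Concretely, I would introduce the two quantities $R_\F(x,y)$ and $R_\G(x,y)$ by adding and subtracting them inside the absolute value, and then apply the triangle inequality to obtain
\begin{align*}
|R_\F(x',y) - R_\G(x',y)| \leq \;& |R_\F(x',y) - R_\F(x,y)| \\
& + |R_\F(x,y) - R_\G(x,y)| \\
& + |R_\G(x,y) - R_\G(x',y)|.
\end{align*}
Next I would bound the first and third summands using the $L$-Lipschitz hypothesis on $\ell_\F(\cdot, y)$ and $\ell_\G(\cdot, y)$ respectively, together with $\|x' - x\|_2 \leq \epsilon$, giving $L\epsilon$ each. The middle summand is left untouched, since it is exactly the term appearing on the right-hand side of the proposition. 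Summing yields $2L\epsilon + |R_\F(x,y) - R_\G(x,y)|$, matching the claim.

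There is essentially no hard step here; the argument is a one-shot triangle inequality combined with a pointwise Lipschitz bound. The only mild subtlety is the notational one that $R_\F$ is written as an expectation in the preceding text while being evaluated at a specific argument in the proposition. I would handle this by treating the stated inequality pointwise in $(x,y)$, which is how the Lipschitz hypothesis is formulated; if one prefers to take a subsequent expectation over any remaining randomness, the same inequality passes through by linearity and monotonicity of expectation. The key modeling observation the proof makes transparent is that once both losses are smooth, the discrepancy at an adversarial point can exceed the discrepancy at the clean point by at most $2L\epsilon$, which motivates the rest of the paper's focus on preventing the Lipschitz control from forcing $\F$ and $\G$ to agree on clean inputs.
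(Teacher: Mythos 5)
Your proof is correct and rests on the same underlying idea as the paper's: apply the $L$-Lipschitz bound on each of $\ell_\F$ and $\ell_\G$ to the common perturbation $\delta_x = \A_\F(x) - x$ to absorb a cost of $L\epsilon$ per model, then combine with the clean-point discrepancy. The paper packages this differently (it bounds each sign of the difference separately via a supremum over $\|\delta\|_2 \le r$ and then invokes symmetry, rather than taking the absolute value in one shot with a three-term triangle inequality), but the two arguments are mathematically identical, and your handling of the $R_\F$-as-expectation notational point is the right one.
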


The proof for the proposition can be found in~\Cref{proof:prop}. In Proposition~\ref{prop:trans}, we study the $T_{rate}$ of an adversarial example generated by $A_\mathcal{F}$ to model $\G$ by using the difference in the population loss of the two models on these adversarial examples as a proxy; the lower this difference is, the more likely that the two models will perform similarly on these adversarial examples. To verify this conjecture empirically, in Figure~\ref{fig:vanilla_compare} we show how $T_{rate}$ changes among three ResNet-18~\citep{he2016deep} models without batch norm layers as the layer-wise Lipschitz constant (which governs the upper-bound on the Lipschitz constant of the model as mentioned earlier) changes. Figure~\ref{fig:vanilla_compare_wBN} shows the same behavior for the ResNet-18 models with batch norm layers. For more details see \S~\ref{sec:tradeoff}. 

\begin{tcolorbox}
\noindent{\bf Main Takeaway:} According to Proposition~\ref{prop:trans}, decreasing $L$ (Lipschitz constant), might be an indicator of higher similarity in the risk of the two models on the adversarial examples, which might imply a higher $T_{rate}$. Therefore, although a lower Lipschitz constant could contribute to the robustness of a single model, it might increase the $T_{rate}$ among the models of an ensemble which might hinder the expected benefits of the Lipschitzness. 
\end{tcolorbox}

\section{Layer-wise Orthogonalization for Training Robust Ensembles}
\label{sec:LOTOS}

When clipping the spectral norm of the layers, we are reducing the capacity of the parameters that can be used during the optimization~\citep{neyshabur2017exploring,bartlett2017spectrally}; it is more likely that the parameters of the two models become ``more similar'' when optimized over these constrained spaces. Therefore, to control the Lipschitz constant of each model to make them more ``individually'' robust, {\em and} avoid sacrificing the ``ensemble robustness'', we need to utilize other modifications to enforce the diversity of their decision boundaries on the adversarially perturbed samples.

\begin{tcolorbox}
\noindent{\bf Our Intuition:} The method that we introduce to enforce the diversity %of the decision boundaries on perturbed samples 
is based on {\em promoting the orthogonality of the sub-spaces of the corresponding layers of the models that correspond to their top singular vectors}. Since the top singular vectors govern the major part of the transformation by each layer, this orthogonality promotes the difference in the outputs of the corresponding layers from different models. 

\end{tcolorbox}
Affine layers transform the input space such that the sub-space spanned by the top singular vectors will have the most amount of change in the output space for a perturbed input. When the adversary is choosing a perturbation to add to the input, a natural choice would be to choose the direction along the top singular vectors: with the same amount of perturbation, the adversary will get the highest amount of change in the output space for a perturbed input along this direction. Based on this analogy, we consider any two corresponding affine layers $f^{(j)},g^{(j)}$ from a pair of the models $\mathcal{F}$ and $\mathcal{G}$ in the ensemble, whose linear transformations are represented by matrices $A$ and $B$, with the singular value decompositions $A = \sum_{i=1}^d \sigma_i u_i v_i^T$ and $B = \sum_{i=1}^d \sigma_i^\prime u_i^\prime v_i^{\prime T}$, respectively. We define a notion of similarity based on the top-$k$ sub-spaces:
\begin{align}
    \label{equ:sk}
    S_k^{(j)}(f^{(j)},g^{(j)}, \texttt{mal}) := \sum_{i=1}^k w_i ( \mathrm{ReLU} (\|f^{(j)}(v_i^\prime)\|_2 - \texttt{mal}) + \mathrm{ReLU} (\|g^{(j)}(v_i)\|_2 - \texttt{mal})),
\end{align}
\noindent where (a) $w_i$'s are arbitrary weights which are non-increasing with $i$ to emphasize ``more importance'' for the singular vectors corresponding to top singular values, and (b) \texttt{mal} refers to the \textit{maximum allowed length} of the output of each layer when it is given the singular vectors of the other layer as the input (see~\Cref{apx:efficacy} for its effect in practice). Observe that when \texttt{mal} is set to $0$, the value of $S_k(f,g)$ is $0$ if the transformations of $f$ and $g$ are orthogonal in their top-$k$ sub-space (i.e., $\|f^{(j)}(v_i^\prime)\|_2 = \|A v_i^\prime\|_2 = \|\sum_{i=1}^d \sigma_i u_i v_i^T v_i^\prime\|_2 = 0$).

Utilizing this insight, we introduce our technique, \textbf{Layer-wise Orthogonalization for Training Robust Ensembles (\texttt{LOTOS})}. \texttt{LOTOS} promotes the orthogonality among these sub-spaces which leads to different behaviors when perturbing the clean samples along a specific direction. We add this similarity for each pair of corresponding affine layers (dense and convolutional layers) in each pair of models within the ensemble and add them to the cross-entropy loss. More specifically, given an ensemble of $N$ models $\cF_i, \,\, i=\{1,\dots,N\}$ with $M$ layers that would be incorporated in the orthogonalization process, the new loss becomes:
\begin{align}
\mathcal{L}_{\text{train}} = \frac{1}{N}\sum_{i=1}^{N}\mathcal{L}_{\text{CE}}(\mathcal{F}_i(x), y) + \frac{\lambda}{M\,N\,(N-1)}\sum_{z=1}^{N-1}\sum_{j=z+1}^{N} \sum_{l=1}^{M} S_k^{(l)}(f_z^{(l)},f_j^{(l)}, \texttt{mal})
\label{equ:LOTOS-loss}
\end{align}

where $\mathcal{L}_{\text{CE}}(\mathcal{F}_i(x), y)$ is the cross-entropy loss of $\mathcal{F}_i(x)$ given its output on $x$ and the ground-truth label $y$. $\lambda$ controls the effect of the orthogonalization loss and could be adjusted. 
%The detailed hyper-parameter setting and training criterion are discussed in \S~\ref{sec:setup}.

\subsection{Efficiency of LOTOS}
\label{sec:efficiency_LOTOS}

\noindent{\bf Time Complexity:} Note that the number of summands in the orthogonalization loss is $O(N^2 M)$. The computation of $S_{k}^{(t)}$ uses the computed singular vectors of each layer by FastClip~\citep{boroojeny2024spectrum}, which is fast and accurate in practice, and feeds them to the corresponding layer of the other models (see~\Cref{equ:sk}): therefore, it is as if each model has an extra batch of size $N-1$ to process at each iteration, which is relatively small when $N$ is small. In Appendix~\ref{sec:time}, we compare the empirical running time per epoch for ensembles of three models. Although this increase in running time is small, our experiments (\S~\ref{sec:results}) show that performing the orthogonalization for only the first layer would be effective for training robust ensembles (i.e., $M=1$). Therefore, the increase in the training time becomes negligible compared to when the clipping model is used without \texttt{LOTOS}.

\noindent{\bf Highly Efficient for Convolutional Layers:} For the orthogonalization to be effective in~\Cref{equ:LOTOS-loss}, it is necessary to increase the value of $k$ (dimension of orthogonal sub-spaces) because the layers of the DNNs are transformations between high dimensional representations. Therefore, only orthogonalizing the sub-space corresponding to the few top singular values does not guarantee that there is no strong correlation among the remaining top singular vectors (which might correspond to high singular values in each of the models). However, increasing $k$ decreases the computational efficiency, in both compution of the singular vectors~\citep{boroojeny2024spectrum} and computation of the orthogonalization loss in~\Cref{equ:LOTOS-loss}. 

Fortunately, specific properties of the convolutional layers, which are the most common affine layers in DNNs, allow an effective orthogonalization even with very small values of $k$. In~\Cref{theorem-ortho}, we prove even $k=1$ can be effective in orthogonalization with respect to the remaining singular vectors for convolutional layers.

\begin{theorem}
\label{theorem-ortho}
    Given two convolutional layers, $M_1$ and $M_2$ with a single input and output channel and circular padding for which $\textbf{f}$ is the vectorized form of the filter with a length of $T$, and considering $n$ to be the length of the vectorized input, if $\|A v_1^\prime\|_2 \leq \epsilon $, then: % and $\|A^\prime v_1\|_2 \leq \epsilon $, then:
\begin{align}
\label{equ:upperbound}
    \|A v_p^\prime\|_2 \leq \sqrt{\epsilon^2 + \pi \|\textbf{f}\|_2^2 \, T^2 \frac{p}{n}},
\end{align}

\noindent where $A$ is the corresponding linear transformation of $M_1$ and $v_p^\prime$ is singular vector of $M_2$ corresponding to its $p$-th largest singular value. 
    
\end{theorem}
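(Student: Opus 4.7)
The plan is to exploit that single-channel circulant operators with circular padding are simultaneously diagonalized by the unitary discrete Fourier transform matrix $F$. Writing $\hat{\mathbf{f}}$ for the DFT of the zero-padded filter of $M_1$ and $\hat{\mathbf{g}}$ for that of $M_2$, one has $A = F^{*}\mathrm{diag}(\hat{\mathbf{f}})F$ and an analogous identity for $M_2$. In this common eigenbasis the right singular vectors of $M_2$ are the Fourier modes $\psi_{k} := F^{*}e_{k}$ (up to unit-modulus phases), ordered by $|\hat{\mathbf{g}}(k)|$; writing $\pi$ for this ordering, $v_{p}' = \psi_{\pi(p)}$, and a one-line computation gives $\|A v_{p}'\|_{2} = |\hat{\mathbf{f}}(\pi(p))|$. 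The hypothesis $\|A v_{1}'\|_{2} \leq \epsilon$ therefore reduces to $|\hat{\mathbf{f}}(\pi(1))| \leq \epsilon$, and the theorem becomes a pointwise upper bound on $|\hat{\mathbf{f}}(\pi(p))|$.

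Second, I would establish the smoothness of $\hat{\mathbf{f}}$ that follows from the filter having compact support of length $T$. Starting from $\hat{\mathbf{f}}(k) - \hat{\mathbf{f}}(k') = \sum_{t=0}^{T-1} \mathbf{f}(t)\bigl(e^{-2\pi i k t/n} - e^{-2\pi i k' t/n}\bigr)$, Cauchy--Schwarz combined with $|1-e^{i\theta}| = 2|\sin(\theta/2)| \leq |\theta|$ yields
\[
|\hat{\mathbf{f}}(k)-\hat{\mathbf{f}}(k')|^{2} \;\leq\; \|\mathbf{f}\|_{2}^{2} \sum_{t=0}^{T-1} 4\sin^{2}\!\bigl(\pi(k-k')t/n\bigr) \;\leq\; \tfrac{4\pi^{2}}{3}\,\|\mathbf{f}\|_{2}^{2}\, T^{3}(k-k')^{2}/n^{2}.
\]
Expanding $|\hat{\mathbf{f}}(\pi(p))|^{2}$ around $\hat{\mathbf{f}}(\pi(1))$ and applying the triangle inequality together with $|\hat{\mathbf{f}}(\pi(1))| \leq \epsilon$ gives a preliminary bound of the shape $\epsilon^{2} + O\!\bigl(T^{3}(\pi(p)-\pi(1))^{2}/n^{2}\bigr)$.

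The main obstacle is converting this pointwise Lipschitz estimate into the sublinear-in-$p$ scaling $T^{2}p/n$ that the statement demands, since the index-gap $|\pi(p)-\pi(1)|$ is governed by $M_{2}$ and is a priori unrelated to $M_{1}$. Here I would exploit that $\hat{\mathbf{g}}$ obeys the identical smoothness estimate: consequently the top-$p$ indices of $|\hat{\mathbf{g}}|$ cannot be scattered arbitrarily but must concentrate in a window of width $O(\sqrt{pn/T})$ around $\pi(1)$, by a pigeonhole argument that combines the Lipschitz control on $|\hat{\mathbf{g}}|$ with Parseval's identity $\sum_{k}|\hat{\mathbf{g}}(k)|^{2} = n\|\mathbf{g}\|_{2}^{2}$. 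Substituting $(\pi(p)-\pi(1))^{2} = O(pn/T)$ back into the Lipschitz bound on $\hat{\mathbf{f}}$ collapses the $T^{3}(k-k')^{2}/n^{2}$ factor into $T^{2}p/n$, giving the announced scaling. Tracking the sharp numerical constants via the identity $4\sin^{2}(x/2) = 2-2\cos x$ together with a careful Dirichlet-kernel estimate should then recover the coefficient $\pi$. This clustering estimate together with the constant-chasing is where I expect the bulk of the work to reside.
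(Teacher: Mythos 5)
Your Fourier-diagonalization framework, the reduction to bounding $|\hat f(\pi(p))|$ given $|\hat f(\pi(1))|\le\epsilon$, and the smoothness-in-frequency estimate are all sound and are essentially the paper's own route. The only cosmetic difference is that the paper tracks the \emph{squared} spectrum $|\hat f(j)|^2 = c_0 + 2\sum_{i\ge 1} c_i\cos(2\pi ij/n)$ through the autocorrelations $c_i=\sum_t f_t f_{t+i}$ and bounds differences with a product-to-sum identity, whereas you apply Cauchy--Schwarz to $\hat f$ itself; both yield a Lipschitz-in-frequency-index bound of the same order.

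The genuine gap is in the step you yourself flag as the main obstacle. Your proposed fix---that the top-$p$ frequency indices of $|\hat g|$ must concentrate in a window of width $O(\sqrt{pn/T})$ around $\pi(1)$, by pigeonhole plus Parseval---is false. The squared spectrum $|\hat g(j)|^2$ is a trigonometric polynomial of degree $T-1$ in the angle $2\pi j/n$; for $T\ge 3$ it can have several local maxima that are maximally separated on the circle yet nearly tied in value, so the ordering $\pi$ can jump clear across the circle after only a few indices. Take $g=(1,\delta,1)$ with $\delta>0$ tiny: then $|\hat g(0)|^2=(2+\delta)^2$ and $|\hat g(n/2)|^2=(2-\delta)^2$ are near-ties, while $|\hat g(1)|^2\approx (2+\delta)^2-16\pi^2/n^2$; once $\delta\lesssim 1/n^2$ one gets $\pi(1)=0$ and $\pi(2)=n/2$, hence $|\pi(2)-\pi(1)|=n/2$, far outside any $O(\sqrt{n/T})$ window. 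Parseval only controls the mean of $|\hat g|^2$ over all $n$ frequencies and does nothing to rule out two widely separated near-maxima. What your pigeonhole argument \emph{does} correctly show is that many \emph{consecutive} indices around $\pi(1)$ carry near-top values of $|\hat g|$---that is exactly the content of the paper's Lemma~\ref{lemma-main}---but the theorem needs a converse (that the top values occur at consecutive indices), and smoothness alone does not force it. (For a concrete sense of the damage: pair the $g$ above with $f=(a,-a,0)$, so $|\hat f(0)|=0=\epsilon$ and $|\hat f(n/2)|=2a$; then $\|Av_2'\|_2=2a$, while the right-hand side of~\eqref{equ:upperbound} with $p=2$ is $O(a/\sqrt{n})$.)

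You should know that the paper's proof of Theorem~\ref{theorem-ortho} has the same lacuna, just phrased less transparently: it writes ``Assuming that $\|A_1 v_1'\|_2=s_j$ and $\|A_1 v_i'\|_2=s_{j+i-1}$,'' which is precisely the unjustified assertion that $M_2$'s top-$p$ frequency indices are $p$ consecutive integers, and then cites Corollary~\ref{cor-anypair}. So you correctly isolated the one step that actually needs an argument; the rest of your outline---the diagonalization, the Lipschitz control, and the arithmetic $(\pi(p)-\pi(1))^2\sim pn/T \Rightarrow T^3(\pi(p)-\pi(1))^2/n^2 \sim T^2p/n$---is fine conditional on that clustering claim, which is exactly where both proofs stall.
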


The proof can be found in Appendix~\ref{proof:theorem}. As~\Cref{theorem-ortho} shows, by orthogonalization of the linear transformation of the convolutional layer $M_1$ (i.e., $A$) and only the first singular vector of $M_2$ (i.e., $v_1^\prime$), so that $\|A v_1^\prime\|_2 \leq \epsilon$, the size of the output of $M_1$ when applied to the remaining singular vectors of $M_2$ (i.e., $\|A v_p^\prime\|_2$) will be upper bounded as shown in~\Cref{equ:upperbound}. This upper-bound depends on the ratio of the ranking of the corresponding singular value to the input size (i.e., $\frac{p}{n}$), which gets smaller for the top singular vectors that have a higher contribution to the transformations. It also depends on the size of the kernel ($T$) which is usually small in models used in practice (e.g., $3^2$ in 2D convolutional layers). Finally, it also depends on the $\ell_2$ norm of the filter values, which can be controlled simply by using weight decay when optimizing the parameters during training. We verify this efficiency of \texttt{LOTOS} when applied to convolutional layers in our experiments.

\section{Results}
\label{sec:results}

We wish to answer the following questions: (1) Does decreasing the Lipschitz constant of the models of an ensemble increase the $T_{rate}$ between them?; (2) does \texttt{LOTOS} decrease the $T_{rate}$ among the models of an ensemble, and does this decrease in the $T_{rate}$ among the models of the ensemble lead to a lower success rate in black-box attacks from other source models?; (3) what are the effects of varying the ensemble size and the number of orthogonalized singular vectors ($k$) on the performance of \texttt{LOTOS}?; (4) is \texttt{LOTOS} still effective when the models of the ensemble are different?; (5) can we combine \texttt{LOTOS} with the prior work on training robust ensembles to provide additional enhancements to robustness?; and (6) can \texttt{LOTOS} be combined with common methods used for increasing the robustness of the models, such as adversarial training? 

As a quick summary, our results show that: (1) decreasing the Lipschitz constant of the models of an ensemble, although make them \textit{individually} more robust, increases the $T_{rate}$ among them (\S~\ref{sec:tradeoff}); (2) \texttt{LOTOS} is indeed effective at reducing the $T_{rate}$ between the models of an ensemble which leads to more robust accuracy against black-box attacks (\S~\ref{exp:layer-wise_ortho}); (3) when using \texttt{LOTOS}, increasing the ensemble size leads to much higher improvement in the robust accuracy (\S~\ref{sec:size}), and changing the number singular values has negligible impact on the transferability (\S~\ref{sec:var-k}); (4) \texttt{LOTOS} is effective even when the ensemble is heterogeneous (\S~\ref{subsec:hetero}); (5) \texttt{LOTOS} in conjunction with \texttt{TRS}~\citep{yang2021trs}, which is one of the state-of-the-art methods in training robust ensebmels, yields better performance than either in isolation (\S~\ref{subse:prior}); and (6) \texttt{LOTOS} can be used together with adversarial training to boost the robustness of the ensemble (\Cref{sec:adv_train}).

\noindent{\bf Attacks:} We use both black-box attacks and white-box attacks in our experiments. The \textbf{white-box} attack is used to evaluate the $T_{rate}$ of adversarial examples between the models in the ensemble; for each ordered pair of the models in the ensemble, the former is used as the source model to generate the adversarial examples and then the $T_{rate}$ of the generated adversarial examples is evaluated on the latter (target model) using Definition~\ref{def:trans}. The average of this value for all the ordered pairs of the models is considered the $T_{rate}$ of the ensemble. A low $T_{rate}$ between the models of the ensemble does not necessarily imply a more robust ensemble. So to evaluate the robustness of ensembles against adversarial attacks, we also use black-box attacks. In the \textbf{black-box} attacks, an independently trained source (surrogate) model (of the same type as the models in the ensemble) is used to generate the adversarial examples; we then measure the robust accuracy of the ensembles against these adversarial examples. For further details on the setup of experiments, please refer to Appendix~\ref{sec:setup}.

\subsection{Robustness vs. Transferability}
\label{sec:tradeoff}

In this section, we evaluate our conjecture from \S~\ref{sec:conjecture} which was motivated by Proposition~\ref{prop:trans}. For this, we compare the ensemble of three ResNet-18 models trained without any modification (represented as \texttt{Orig}) to ensembles of models in which all the layers of each model is $C$-Lipschitz (by controlling the spectral norm). We vary this Lipschitz constant to see how it affects the robustness and transferability. The chosen Lipschitz constant for each ensemble is used to represent that in results: for example, $C=1.0$ shows that the affine layers of the models in the ensemble are all clipped to $1.0$. The clipping of the layers was achieved using FastClip~\citep{boroojeny2024spectrum}. 

Figure~\ref{fig:vanilla_compare} shows the results when the batch norm layers are removed. The first two subfigures show the changes in the average accuracy and robust accuracy of \textit{individual} ResNet-18 models. The rightmost plot shows the average $T_{rate}$ between any pair of the models in each ensemble as the layer-wise clipping value (spectral norm) changes. As the figure shows, although the robustness of \textit{individual} models increases with decreasing the clipping value, the $T_{rate}$ among the models increases.

\begin{figure}[t]
\centering
    \includegraphics[width=.99\linewidth]{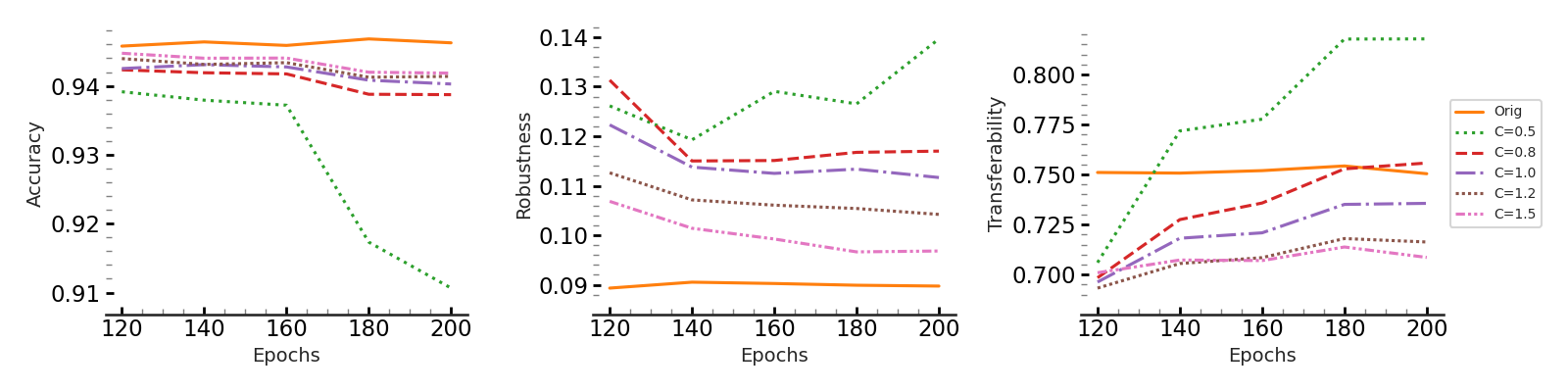}  
\caption{\footnotesize {\bf Accuracy vs. Robust Accuracy vs. Transferability:} Changes in the average accuracy and robust accuracy of \textit{individual} ResNet-18 models (with batch norm layers), along with the average transferability rate between any pair of the models in each ensemble as the layer-wise clipping value changes. As the plots show, although the robustness of \textit{individual} models increases with decreasing the clipping value, the $T_{rate}$ among the models increases, which might forfeit the benefits of the clipping in the robustness of the whole ensemble.}
%\vspace{-4mm}
\label{fig:vanilla_compare_wBN}
\end{figure}

We perform another similar experiment but keep the batch normalization layers in the ResNet-18 models intact (see~\Cref{sec:bn} for more details on evaluating these two cases separately). In~\Cref{fig:vanilla_compare_wBN} we also see that when the batch norm layers are present, \texttt{LOTOS} is still effective; however, the improvement might not be as much as what was observed without batch norm layers. As mentioned earlier, the reason for this lower effectiveness is that the clipping methods for controlling the Lipschitz constant of the models with batch norm layers are less accurate and not as effective. That leads to a less accurate computation of the singular vectors in~\Cref{equ:sk}. Also, batch norm layers are known to adversly affect the robustness of models~\citep{xie2019intriguing,benz2021revisiting} and prior work has pointed out their compensation behavior when controlling the spectral norm of their preceding convolutional layer~\citep{boroojeny2024spectrum}.

\subsection{Efficacy of LOTOS}
\label{exp:layer-wise_ortho}

We first evaluate the effectiveness of \texttt{LOTOS} in decreasing the $T_{rate}$ among the clipped models using the white-box attack. For this, we first use ensembles of three ResNet-18 models and follow the setting explained in~\Cref{sec:setup}.~\Cref{fig:ortho_basic_compare} shows the results for $3$ different methods of training ensembles (\texttt{Orig}, $C=1$, and \texttt{LOTOS}). The left-most subfigure shows the average test accuracy of the \textit{individual} models in each ensemble and the middle subfigure shows the average robust accuracy of the \textit{individual} models in the ensemble. The middle plot shows, as expected, that the individual models in both $C=1$ and \texttt{LOTOS} ensembles are much more robust than the ones in the \texttt{Orig} ensembles because of their Lipschitzness property; however, the \textit{individual} models in \texttt{LOTOS} are not as robust as the ones in $C=1$ because in \texttt{LOTOS} we are enforcing orthogonalization in addition to the Lipschitzness property. Because of this trade-off, as the plot shows, by increasing the value of \texttt{mal} (from $0$ to $0.8$) the robustness of the individual models becomes more similar to the ones in $C=1$ ensembles. As the right-most subfigure shows, the $T_{rate}$ between the models in ensembles trained with \texttt{LOTOS} is much lower than $C=1$ and \texttt{Orig}, and as the \texttt{mal} value decreases (the orthogonalization becomes more strict) the $T_{rate}$ decreases. Figures~\ref{fig:hetero} and~\ref{fig:hetero-wBN} show similar effectiveness in reducing the $T_{rate}$ for ensembles that consist of other models (ResNet-18, ResNet-34, and DLA). See \S~\ref{subsec:hetero} for more details on those figures. Also in~\Cref{apx:efficacy} we show how \texttt{LOTOS} effectively performs the orthogonalization of one layer with respect to the corresponding layer in other models of the ensemble while maintaining the target spectral norm.

\begin{figure}[t!]
\centering
    \includegraphics[width=.99\linewidth]{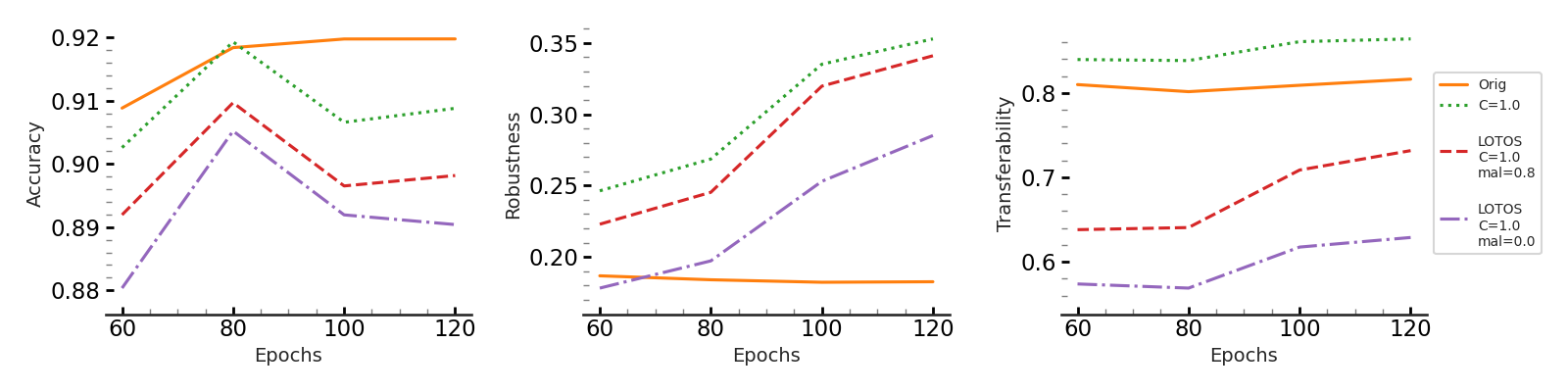}
\caption{\footnotesize  {\bf Reducing transferability while maintaining the benefits of Lipschitzness.} Evaluation of the average test accuracy (left-most plot) and average robust accuracy (middle plot) of the \textit{individual} models in an ensemble of three ResNet-18 models, along with the $T_{rate}$ of adversarial examples between the models of the ensemble using the white-box setting (see Appendix~\ref{sec:setup}). \texttt{LOTOS} keeps the robust accuracy of individual models in the ensemble much higher than those of the \texttt{Orig} ensemble and as \texttt{mal} increases, it becomes more similar to the models in $C=1$. On the other hand, \texttt{LOTOS} leads to a much lower transferability (about $20\%$) and the difference increases as \texttt{mal} decreases (right-most plot). These benefits come at a slight cost to the average accuracy of the individual models (left-most plot).
\vspace{-4mm}}
\label{fig:ortho_basic_compare}
\end{figure}

So far, we observed that \texttt{LOTOS} leads to a noticeable decrease in the transferability at a slight cost in test accuracy and robust accuracy of \textit{individual} models; to make sure that the former overpowers the latter and derives more robust ensembles, we evaluate the robustness of the ensembles against black-box attacks using independently trained surrogate models (see~\Cref{sec:setup} for details). We perform this experiment for both ResNet-18 and DLA models and use both CIFAR-10 and CIFAR-100 for the analysis. The results are presented in~\Cref{tab:bn-dla-resnet}. As the table shows, for each choice of the model architecture and dataset, we train ensembles with either of the 3 methods (i.e., \texttt{Orig}, $C=1$, and \texttt{LOTOS}) and compute their test accuracy and robust accuracy. As the table shows, \texttt{LOTOS} achieves higher robust accuracy in all cases with slight cost to the test accuracy in some cases. 
As~\Cref{tab:nobn-cifar} in~\Cref{sec:var-L} shows, this difference is even more prominent when the batch norm layers are removed because the clipping algorithms are less accurate (see~\Cref{sec:bn}).

\begin{table*}[ht!]
\begin{center}
\begin{small}
\begin{sc}
\resizebox{\textwidth}{!}{
\begin{tabular}{@{} l  c c  c  c  c c  @{}}
 \toprule
%\cmidrule(r){1-4}
 
 %\midrule   
 & \multicolumn{3}{@{}c}{\textbf{CIFAR-10}} & \multicolumn{3}{@{}c}{\textbf{CIFAR-100}} \\\addlinespace[0.3em]

   &  \texttt{Orig} & $C=1.0$ & \texttt{LOTOS}
 & \texttt{Orig} & $C=1.0$ & \texttt{LOTOS}  \\\addlinespace[0.2em]

     \cmidrule(r){2-4}
    \cmidrule(r){5-7}

 & \multicolumn{6}{@{}c}{Ensembles of ResNet-18 models}   \\ \addlinespace[0.4em]%[0.5ex] 

 Test Acc & $\bf 95.3 \pm 0.06$ & $94.7 \pm 0.24$ & $ 94.6 \pm 0.19$
 & $\bf 77.2 \pm 0.17$ & $76.6 \pm 0.01$ & $76.6 \pm 0.10$ \\\addlinespace[0.3em]

  Robust Acc  & $30.3 \pm 1.63$ & $35.2 \pm 0.72$ & $\bf 36.3 \pm 0.88$
 & $15.2 \pm 0.45$  & $18.9 \pm 0.40$ & $\bf 20.2 \pm 0.47$ \\\addlinespace[0.3em]
 
& \multicolumn{6}{@{}c}{Ensembles of DLA models}   \\ \addlinespace[0.5em]%[0.5ex] 

 Test Acc & $\bf 95.4 \pm 0.12$ & $95.2 \pm 0.05$ & $95.05 \pm 0.09$
 & $77.1 \pm 0.09$ & $\bf 78.8 \pm 0.31$ & $78.3 \pm 0.38$ \\\addlinespace[0.3em]

 Robust Acc & $26.7 \pm 0.58$ & $32.8 \pm 1.28$ & $\bf 34.5 \pm 0.63$  
 & $16.5 \pm 0.78$ & $19.4 \pm 0.32$ & $\bf 21.0 \pm 0.39$ \\\addlinespace[0.3em]

%\cmidrule(r){1-4}                        
\bottomrule
\end{tabular}
}
\end{sc}
\end{small}
\end{center}
\caption{\footnotesize {\bf Robust accuracy against black-box attacks in ensembles of ResNet-18 models and ensembles of DLA models trained on CIFAR-10 and CIFAR-100 }. The surrogate models are a combination of both original models and clipped models trained with multiple random seeds. The target models are ensembles of three models from each architecture choice that are trained using either of the three training methods.}
\label{tab:bn-dla-resnet}
%\vskip -0.2in
\end{table*}

%\newpage
\subsection{Ablation Studies}
\label{exp:ablation}

In this section, we explore the effect of $k$ when orthogonalizing the top-$k$ sub-spaces of the convolutional layers. We also investigate the effectiveness of \texttt{LOTOS} as the ensemble size increases. In~\Cref{sec:additional_ablation} we also present additional ablation studies; we evaluate the importance of using the singular vectors in~\Cref{equ:sk} by comparing the transferability when they are replaced with random vectors (\Cref{sec:rand}). We also evaluate the effect of \texttt{mal} value on the transferability (\Cref{sec:var-mal}) and the effect of Lipschitz constant on the strength of the surrogate model in black-box attacks (\Cref{sec:var-L}).

\subsubsection{Increasing Ensemble Size}

%The expectation is that with increasing the number of models, the effect of reducing the TR increases the robustness of the ensemble against black-box attacks. 
To verify the effectiveness of \texttt{LOTOS} as the ensemble size increases, we evaluate the improvement in the robust accuracy of the ensembles against black-box attacks (see Appendix~\ref{sec:setup} for details) when the number of models in the ensemble increase by factors of $3$. Table~\ref{tab:size} shows the results of this experiment for ensembles of ResNet-18 models on CIFAR-10 dataset; the improvement in the robust accuracy when the ensemble size increases is relatively small for the \texttt{Orig} and $C=1$ ensembles. However, when \texttt{LOTOS} is used to train the ensemble, there is a huge improvement in the robustness of the ensemble as the size increases which is due to the increased diversity of the models by orthogonalizing them with respect to one another. The lack of noticeable improvement in the $C=1$ ensembles highlights the fact that making each model individually robust does not necessarily improve the robustness of the ensemble. \texttt{LOTOS} archives this improvement by effectively decreasing the $T_{rate}$ among the models.

\label{sec:size}

\subsubsection{Dimension of Orthogonal Sub-spaces ($k$)}
\label{sec:var-k}

To further evaluate the theoretical observation from~\Cref{theorem-ortho} for general convolutional layers, we use white-box attacks to measure the $T_{rate}$ among the models in an ensemble of three ResNet-18 models when different values of $k$ are used for orthogonalization of the top-$k$ subspace (see~\Cref{equ:sk}). As~\Cref{fig:ablation} (Left) shows, with increasing $k$ there is a slight improvement in the $T_{rate}$ but is still less than one percentage point (compared to $k=1$) even when $k=15$. For $k \geq 20$, we noticed a degradation in the training of the models and the $T_{rate}$, which might be due to over-constraining the models. Given the computational efficiency and the negligible difference in the transferability, we found $k=1$ to be enough and used it in our other experiments.

\begin{figure}[t!]
\begin{minipage}[b]{.48\linewidth}
\centering
\begin{subfigure}{.49\textwidth}
    \centering
    \includegraphics[width=.99\linewidth]{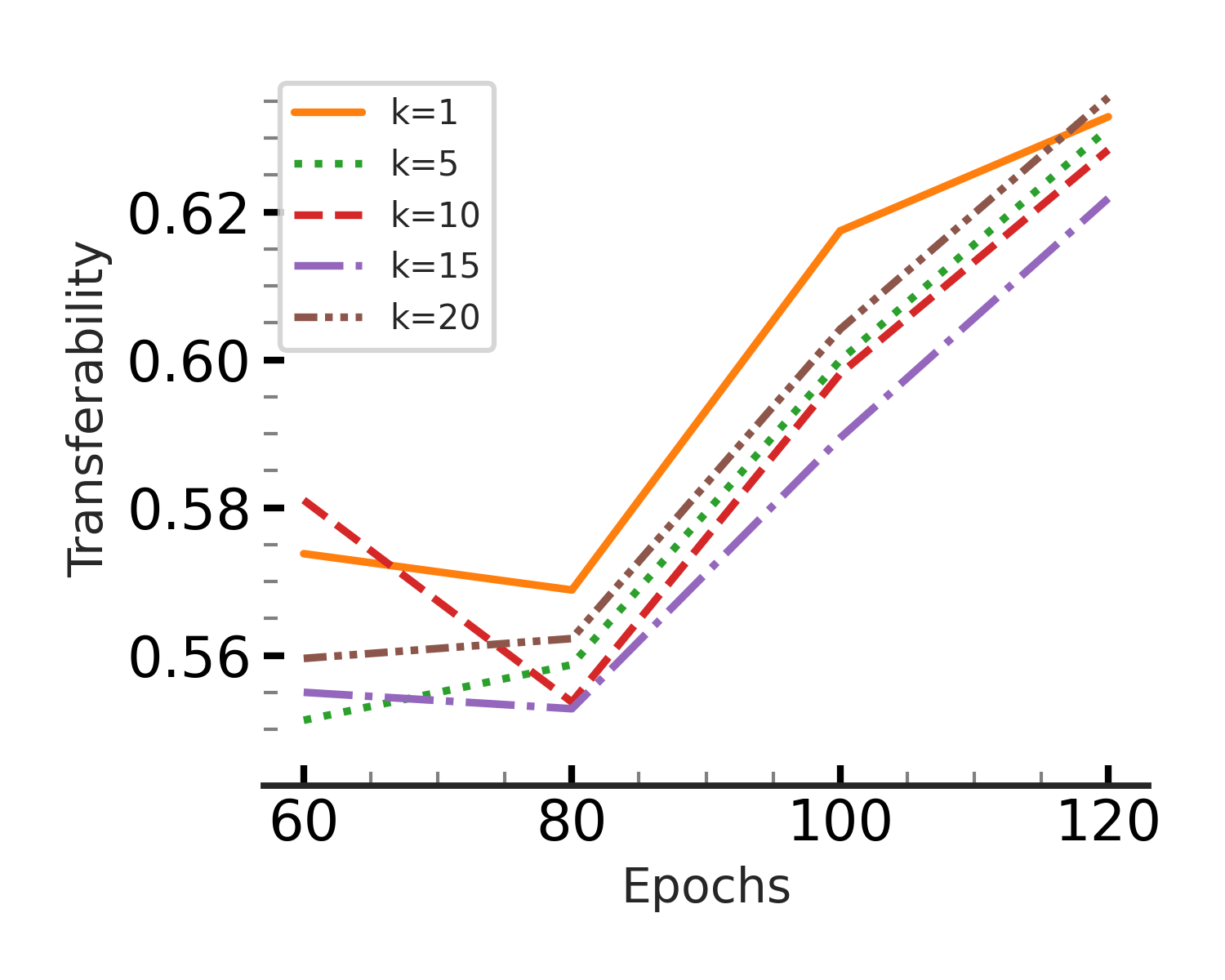}
    %\caption{}
\end{subfigure}
\begin{subfigure}{.49\textwidth}
    \centering
    \includegraphics[width=.99\linewidth]{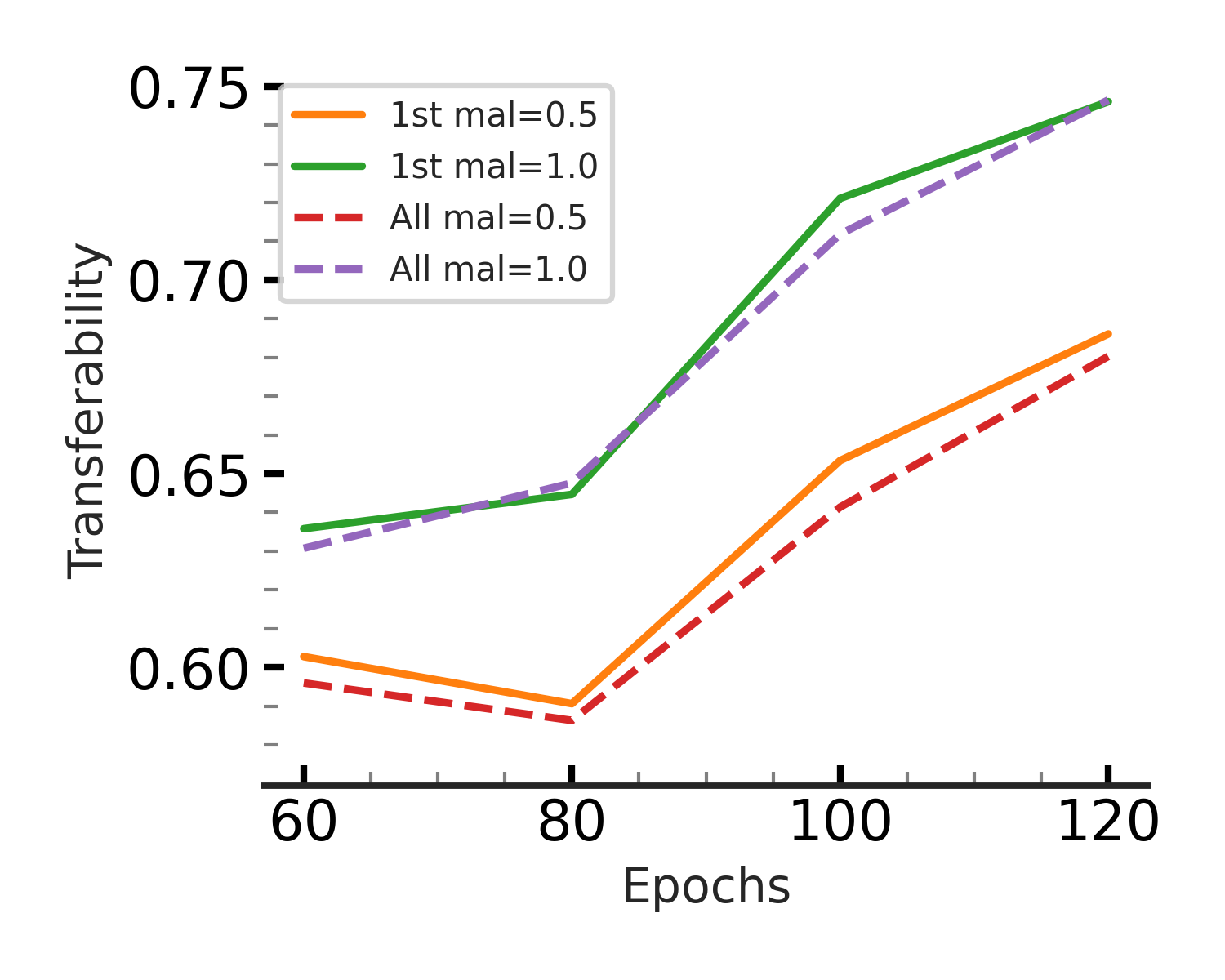}  
    %\caption{}
\end{subfigure}
\caption{\footnotesize  {\bf (Left) Effect of $k$.} 
As the plot shows, the transferability slightly decreases (up to $1\%$) as $k$ gets larger up to some point (here $k=15$) and then starts to increase ($k=20$).  {\bf (Right) First layer might be enough!} Comparing the effect of applying \texttt{LOTOS} to only the first layer rather than all the convolutional layers. The similarity of the results motivates the effectiveness of \texttt{LOTOS} for heterogeneous models where it can only be applied to the first layers.} 
\vspace{-3mm}
\label{fig:ablation}
\end{minipage}\hfill
\begin{minipage}[b]{.48\linewidth}
\centering
\vskip 0.15in
\begin{center}
\begin{small}
\begin{sc}
\resizebox{\textwidth}{!}{
\begin{tabular}{@{} l  c c  c     @{}}
 \toprule
%\cmidrule(r){1-4}

   &  $N=3$ & $N=9$ & {\footnotesize Improvement}
 \\\addlinespace[0.3em]

    \cmidrule(r){2-4}

 \texttt{Orig} & $16.4 \pm 1.11$ & $17.0 \pm 1.35$ & $3.7\%$

 \\\addlinespace[0.3em]
 
 $C=1$ & $26.4 \pm 0.57$  & $27.5 \pm 0.95$  & $4.2 \%$

 \\\addlinespace[0.3em]

 \texttt{LOTOS} & $\bf 32.2 \pm 0.99$ &  $\bf 45.1 \pm 2.28$ & $\bf 40.1\%$

 \\\addlinespace[0.3em]

%\cmidrule(r){1-4}                        
\bottomrule
\end{tabular}
}
\end{sc}
\end{small}
\end{center}
\captionof{table}{\footnotesize {\bf Robust accuracy against black-box attacks in ensembles with different sizes}. The target models are ensembles in three different cases; original models (\texttt{Orig}), clipped ones ($C=1$), and trained with \texttt{LOTOS}. As the table shows \texttt{LOTOS} has the highest robust accuracy in each ensemble size and it has the highest rate of improvement when the number of models in the ensemble increases from $3$ to $9$. Note that the robust accuracy of the single \texttt{Orig} model and a single $C=1$ model are $16.1 \pm 0.87$ and $26.2 \pm 0.52$.\vspace{-3mm}}
\label{tab:size}
\vskip -0.1in
\end{minipage}
\end{figure}

\subsection{Heterogeneous Ensembles}
\label{subsec:hetero}

Although the original formulation of \texttt{LOTOS} relies on the similarity of the architecture which allows the layer-wise orthogonalization with respect to other models, when considering different architectures, the first affine transformation of each model is applied to the input data, and therefore has still the same vector space for right singular vectors. Therefore, \texttt{LOTOS} can still be utilized to orthogonalize the first layers on different models. Also, as~\Cref{fig:ablation} (Right) shows, applying \texttt{LOTOS} to only the first layers would still effectively decrease the transferability among the ResNet-18 models in an ensemble of three models trained on CIFAR-10. In this experiment, we consider ensembles of one ResNet-18, one ResNet-34, and one DLA model on CIFAR-10 and report the average accuracy and robust accuracy of individual models, along with the average transferability among them using white-box attacks. As~\Cref{fig:hetero-wBN} shows, \texttt{LOTOS} is effective in reducing the transferability of different models by orthogonalizing only their first convolutional layers, as that would lead to different behaviors on the perturbations of the input.

\begin{figure}[t!]
\centering
    \includegraphics[width=.99\linewidth]{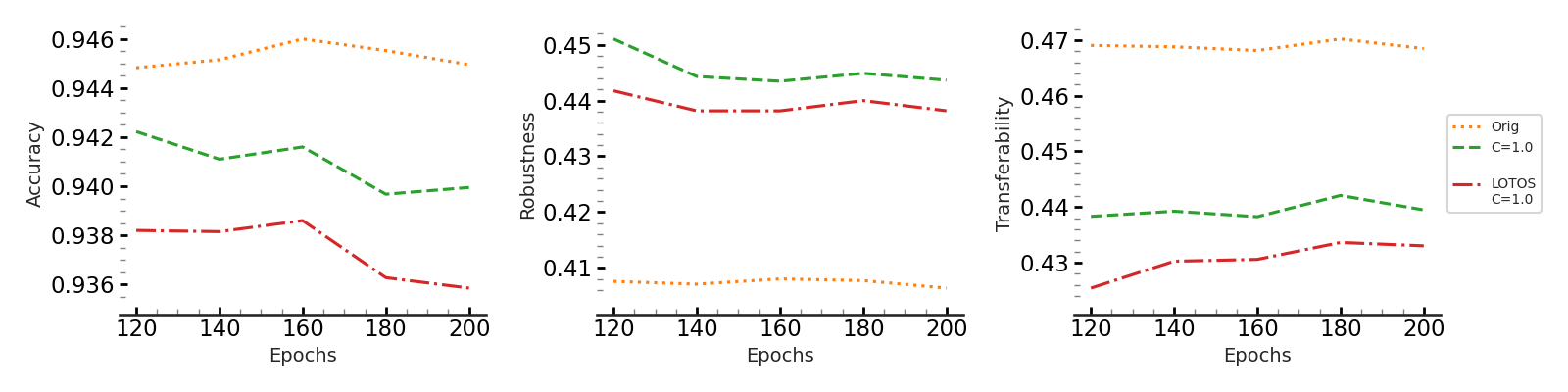}
\caption{\footnotesize {\bf Investigating the effect of \texttt{LOTOS} on the average accuracy and robust accuracy of each of the models of heterogeneous ensembles of DLA, ResNet-18, and ResNet-34 models}, along with presenting the average transferability among any pair of the models in the ensemble as the training proceeds. As the plots show, \texttt{LOTOS} leads to a lower transferability among the models while maintaining the benefits of controlling the Lipschitz constant on the robustness of individual models.}
\vspace{-3mm}
\label{fig:hetero-wBN}
\end{figure}

We also perform a similar experiment but with models without batch norm layers. Since the DLA models cannot be trained without batch norm layers, as observed by prior work~\citep{boroojeny2024spectrum}, we consider ensembles of one ResNet-18 and one ResNet-34 models on CIFAR-10 and report the average accuracy and robust accuracy of individual models, along with the average transferability among them using white-box attacks. As~\Cref{fig:hetero} shows, \texttt{LOTOS} is effective in reducing the transferability of different models by orthogonalizing only their first convolutional layers, as that would lead to different behaviors on the perturbations of the input. As expected, the improvement in the transferability rate is much higher without batch normalization layers (see~\Cref{sec:bn}).

\begin{figure}[ht!]
\centering
 \includegraphics[width=.99\linewidth]{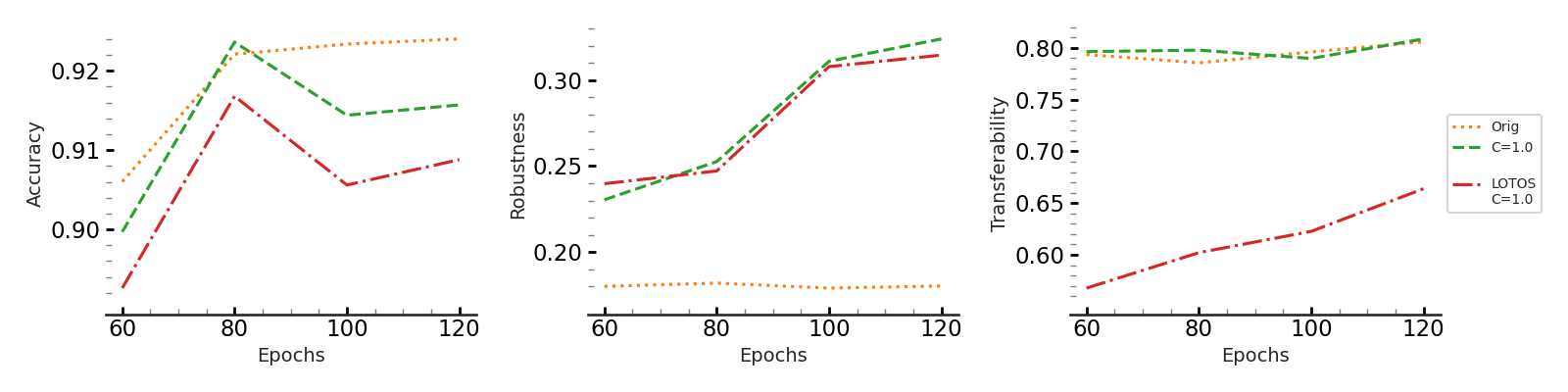}
\caption{\footnotesize {\bf Investigating the effect of \texttt{LOTOS} on the average accuracy and robust accuracy of each of the models of heterogeneous ensembles of ResNet-18 and ResNet-34 models (without batch norm layers)}, along with presenting the average transferability among any pair of the models in the ensemble as the training proceeds. As the plots show, \texttt{LOTOS} leads to a much lower transferability among the models while maintaining the benefits of controlling the Lipschitz constant on the robustness of individual models.}
\vspace{-3mm}
\label{fig:hetero}
\end{figure}

\subsection{Improving Prior Methods}
\label{subse:prior}

For training robust ensembles prior methods have proposed different approaches and in this section, we are interested in observing the effect of \texttt{LOTOS} when combined with these prior works to see if the robustness can be further improved. For this purpose, we use TRS~\citep{yang2021trs}, which is one of the SOTA methods in training robust ensembles and is used as the method of choice by recent works~\citep{sitawarin2023defending}. We report the accuracy and robust accuracy of the ensembles against black-box attacks.~\Cref{tab:bn-cifar} shows the results for the ensembles trained with TRS only (TRS), trained with TRS with clipped models (TRS + $C=1$), and trained with both TRS and \texttt{LOTOS} (TRS + \texttt{LOTOS}). As the results show, for both datasets and both model architectures, the robust accuracy of the combined methods is the highest value. This comes at a slight decrease in the accuracy for CIFAR-10 and a more noticeable degradation for CIFAR-100. Note that for this experiment, we did not perform any hyper-parameter tuning and used both methods with the default parameters.

\begin{table*}[ht!]
\begin{center}
\begin{small}
\begin{sc}
\resizebox{\textwidth}{!}{
\begin{tabular}{@{} l  c c  c  c  c c   @{}}
 \toprule
%\cmidrule(r){1-4}
 
 %\midrule   
 & \multicolumn{3}{@{}c}{\textbf{CIFAR-10}} & \multicolumn{3}{@{}c}{\textbf{CIFAR-100}} \\\addlinespace[0.3em]

   &  TRS & TRS + $C=1$ & TRS + \texttt{LOTOS}
 & TRS & TRS + $C=1$ & TRS + \texttt{LOTOS}  \\\addlinespace[0.3em]

    \cmidrule(r){2-4}
    \cmidrule(r){5-7}

& \multicolumn{6}{@{}c}{Ensembles of ResNet-18 models}   \\ \addlinespace[0.5em]%[0.5ex] 

 Test Acc & $\bf 94.4 \pm 0.05$ & $94.1 \pm 0.17$ & $92.7 \pm 0.09$
 & $\bf 73.28 \pm 0.46$ & $72.94 \pm 0.29$ & $67.23 \pm 1.22$ \\\addlinespace[0.3em]
 
 Robust Acc & $30.8 \pm 0.65$ & $35.9 \pm 1.35$ & $\bf 41.5 \pm 1.04$  
 & $12.3 \pm 0.53$ & $16.3 \pm 0.57$ & $\bf 20.7 \pm 0.99$ \\\addlinespace[0.3em]

& \multicolumn{6}{@{}c}{Ensembles of DLA models}   \\ \addlinespace[0.5em]%[0.5ex] 

 Test Acc & $\bf 94.72 \pm 0.06$ & $92.79 \pm 0.13$ & $93.18 \pm 0.14$
 & $\bf 72.6 \pm 0.54$ & $63.3 \pm 1.20$ & $66.8 \pm 1.26$ \\\addlinespace[0.3em]
 
 Robust Acc & $31.2 \pm 0.80$ & $32.9 \pm 0.77$ & $\bf 35.3 \pm 0.39$  
 & $23.2 \pm 0.41$  & $23.7 \pm 2.36$ & $\bf 24.3 \pm 1.67$ \\\addlinespace[0.3em]

%\cmidrule(r){1-4}                        
\bottomrule
\end{tabular}
}
\end{sc}
\end{small}
\end{center}
\caption{\footnotesize {\bf Robust accuracy against black-box attacks in ensembles of ResNet-18 and ensembles of DLA models trained with TRS.} We use ensembles of three models for three different cases; trained with TRS only, trained with TRS while clipping the models, and trained with both TRS and \texttt{LOTOS}. As the results show, using TRS and \texttt{LOTOS} achieves a robust accuracy that is higher than when either of these methods is used.}
\label{tab:bn-cifar}
%\vskip -0.2in
\end{table*}

\subsection{LOTOS and Adversarial Training}
\label{sec:adv_train}

Adversarial training is the most common method for increasing the robustness of individual models in practice. This method finds a new set of adversarial examples for the training set at each iteration of the training algorithm and combines that with the original training data to feed it to the training method for the next iteration. Iterative training of the model on the adversarial examples, which are perturbed versions of the original training samples, makes them less sussptible to the adversarial examples. In this section, we verify that our proposed training paradaigm not only does not interfere with the robustness of the ensembles of models that are adversarialy trained, but also improves the robustness of the ensemble. For this we train ensembles of ResNet-18 and ensembles of DLA models on CIFAR-10 dataset, and incorporate adversarial training for each model within the ensemble. We repeat this procedure for three settings: 1. no further modification (\texttt{Orig}), 2. clipping each model to $1.0$ ($C=1.0$), and 3. using LOTOS. We evaluate the robust accuracy of these ensembles against blackbox attacks and summarize the results in~\Cref{tab:adv_train}. As the table shows, the ensembles trained with \texttt{LOTOS} achieve a higher robust accuracy while achiving similar test accuracy.

\begin{table*}[ht!]
\begin{center}
\begin{small}
\begin{sc}
\begin{tabular}{@{} l  c c  c   @{}}
\toprule

   &  Adv Train & Adv Train + $C=1$ & Adv Train + \texttt{LOTOS}
 \\\addlinespace[0.3em]

    \cmidrule(r){2-4}

& \multicolumn{3}{@{}c}{Ensembles of ResNet-18 models}   \\ \addlinespace[0.5em]%[0.5ex] 

 Test Acc & $\bf 93.1 \pm 0.00$ & $92.5 \pm 0.16$ & $92.7 \pm 0.09$
 \\\addlinespace[0.3em]
 
 Robust Acc & $60.1 \pm 0.93$ & $60.9 \pm 1.49$ & $\bf 61.7 \pm 1.04$  
 \\\addlinespace[0.3em]

& \multicolumn{3}{@{}c}{Ensembles of DLA models}   \\ \addlinespace[0.5em]%[0.5ex] 

 Test Acc & $90.8 \pm 0.09$ & $93.7 \pm 0.24$ & $\bf 93.9 \pm 0.22$
 \\\addlinespace[0.3em]
 
 Robust Acc & $59.6 \pm 0.93$ & $61.2 \pm 1.32$ & $\bf 62.7 \pm 1.34$  
 \\\addlinespace[0.3em]

\bottomrule
\end{tabular}
\end{sc}
\end{small}
\end{center}
\caption{\footnotesize {\bf Robust accuracy against black-box attacks in ensembles of ResNet-18 and ensembles of DLA models trained with adversarial training} We use ensembles of three models for three different cases; trained with adversarial training only, trained with adversarial training while clipping the models, and trained with both adversarial training and \texttt{LOTOS}. As the results show, using both adversarial training and \texttt{LOTOS} achieves a robust accuracy that is higher than when either of these methods are used.}
\label{tab:adv_train}
\vskip -0.1in
\end{table*}

\section{Conclusions \& Limitations}

We showed there is a trade-off between the robustness of individual models in the ensemble and the transferability rate of adversarial examples among them as the Lipschitz constant of the models changes. This trade-off prevents the expected boost in the robustness of the ensembles of models when they are Lipschitz continuous. Motivated by this observation, we proposed \texttt{LOTOS} that decreases the transferability rate by orthogonalizing the top sub-space of the corresponding layers of different models with respect to one another. We performed a thorough ablation study on the components of our method and showed the effectiveness of \texttt{LOTOS} in boosting the robustness of ensembles. We also showed the negligible computational overhead of our model and the fact that it is only limited by the speed of the underlying method used for controlling the spectral norm of affine layers, which has become practical using recent methods. Another limitation of our method is that it is affected by the degradation of the clipping methods in the models with batch normalization layers.
\clearpage

%\clearpage

\bibliography{main}
\bibliographystyle{plainnat}

%%%%%%%%%%%%%%%%%%%%%%%%%%%%%%%%%%%%%%%%%%%%%%%%%%%%%%%%%%%%

%%%%%%%%%%%%%%%%%%%%%%%%%%%%%%%%%%%%%%%%%%%%%%%%%%%%%%%%%%%%

%\newpage
\appendix
\section*{Appendix}

\section{Proofs}
\label{apx:proofs}

\subsection{Proof for Proposition~\ref{prop:trans}}
\label{proof:prop}

\begin{proof}
    \begin{align*}
    R_\F(\A(x),y) - R_\G(\A(x),y) &= \E_x \ell_\mathcal{F}(\A(x),y) - \E_x \ell_\mathcal{G}(\A(x),y)  = 
    \E_x [\ell_\mathcal{F}(\A(x),y) - \ell_\mathcal{G}(\A(x),y)] \\ 
    &\leq \E_x [\mathrm{sup}_{\|\delta\|_2 < r} (\ell_\mathcal{F}(x+\delta,y) - \ell_\mathcal{G}(x+\delta,y))] \\
    &\leq \E_x [\mathrm{sup}_{\|\delta\|_2 < r} (\ell_\mathcal{F}(x,y) - \ell_\mathcal{G}(x,y)) + 2L\|\delta\|_2]\\
    &\leq \E_x  [(\ell_\mathcal{F}(x,y) - \ell_\mathcal{G}(x,y)) + 2Lr] \\
    &= \int_\X  \ell_\mathcal{F}(x,y) - \ell_\mathcal{G}(x,y) + 2Lr \, d(x,y)\\
    &= 2Lr + R_\F(x,y) - R_\G(x,y).
\end{align*}

We can get a similar inequality starting from $R_\G(\A(x),y) - R_\F(\A(x),y)$ to derive the desired inequality.

\end{proof}

\subsection{Proof for Theorem~\ref{theorem-ortho}}
\label{proof:theorem}

To prove this theorem, we first prove Lemma~\ref{lemma-main} that derives an upper-bound for the difference of the squared values of the largest singular value and any other singular value of a convolutional layer with $1$ input and output channel and circular padding. The main intuition behind the proof of Lemma~\ref{lemma-main} is that the singular values of convolutional layers depend only on the real parts of the first few powers of roots of unity~\citep{boroojeny2024spectrum}. This causes a correlation between the magnitude of the singular values that correspond to the neighboring roots of unity on the real axis.

\begin{lemma}
\label{lemma-main}
    For a convolutional layer with $1$ input and output channel and circular padding, if the length of the vectorized input is $n$, then:
    
\begin{align*}
    \sigma_1^2 - \sigma_p^2 \leq \pi \|\textbf{f}\|_2^2 \, T^2 \frac{p}{n}, 
\end{align*}

\noindent where $\textbf{f}$ is the vectorized form of the convolutional filter with a length of $T$. 

\end{lemma}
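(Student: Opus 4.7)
The plan rests on the classical fact that a 1-in/1-out circular convolution is a circulant matrix, diagonalized by the DFT, whose eigenvalues (and hence, by normality, singular values) are $|\hat f(2\pi k/n)|$, where $\hat f(\omega) := \sum_{t=0}^{T-1} f_t\, e^{it\omega}$. Defining the trigonometric polynomial $g(\omega) := |\hat f(\omega)|^2$, the multiset $\{g(2\pi k/n) : k=0,\dots,n-1\}$ is exactly $\{\sigma_k^2\}$. The lemma thus becomes a statement about how much the values of the smooth function $g$ can vary between a global maximum on the grid and a $p$-th largest value on the grid, and I would attack it by combining a uniform derivative bound with a pigeonhole argument.

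First I would derive a clean uniform bound on $g'$. Expanding $g(\omega) = \sum_{s,t} f_s f_t\, e^{i(t-s)\omega}$ and differentiating, a symmetry step ($s\leftrightarrow t$) cancels the imaginary part and leaves
\[
g'(\omega) \;=\; -\sum_{s,t} f_s f_t\,(t-s)\,\sin((t-s)\omega).
\]
Using $|t-s|\le T-1$, $|\sin|\le 1$, and $\|\textbf{f}\|_1^2 \le T \|\textbf{f}\|_2^2$ (Cauchy--Schwarz) then gives $|g'(\omega)| \le T^2 \|\textbf{f}\|_2^2$ for every $\omega$.

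Next comes a short pigeonhole step. Let $\omega^\star$ be any grid point with $g(\omega^\star) = \sigma_1^2$, and consider the $p$ grid points closest to $\omega^\star$ (including $\omega^\star$ itself). Since the grid spacing is $2\pi/n$, all of these lie within distance $\lfloor p/2\rfloor \cdot (2\pi/n) \le \pi p/n$ of $\omega^\star$. By definition of the $p$-th largest singular value, only $p-1$ grid points in total can have $g$-value strictly exceeding $\sigma_p^2$, so at least one point $\omega'$ in this neighborhood must satisfy $g(\omega') \le \sigma_p^2$. Applying the mean value theorem together with the derivative bound then yields
\[
\sigma_1^2 - \sigma_p^2 \;\le\; g(\omega^\star) - g(\omega') \;\le\; \|g'\|_\infty\,|\omega^\star - \omega'| \;\le\; T^2 \|\textbf{f}\|_2^2 \cdot \frac{\pi p}{n},
\]
which is the desired inequality.

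The main technical subtlety I expect is keeping the constant tight: a naive bound $|g'|\le 2|\hat f||\hat f\,'|$ would lose a factor of two and weaken the constant from $\pi$ to $2\pi$. Exploiting the $s\leftrightarrow t$ symmetry to drop the imaginary part of $g'$ is what recovers the stated constant, while the remaining ingredients (recognizing the circulant/DFT structure, the $\ell_1$--$\ell_2$ Cauchy--Schwarz, and the uniform-grid pigeonhole) are standard.
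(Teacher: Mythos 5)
Your proof is correct and follows the same strategy as the paper's: interpret the singular values as grid samples of the trigonometric polynomial $g(\omega)=|\hat f(\omega)|^2$, bound how much $g$ can change over a few grid spacings, and apply a pigeonhole argument to find a grid point within $\lfloor p/2\rfloor$ steps of the maximizer whose $g$-value is at most $\sigma_p^2$. Your execution is somewhat cleaner and more self-contained — you bound the continuous derivative $\|g'\|_\infty\le T^2\|\mathbf f\|_2^2$ via Cauchy--Schwarz ($\|\mathbf f\|_1^2\le T\|\mathbf f\|_2^2$) and finish with the mean value theorem, whereas the paper cites the explicit autocorrelation formula $s_j^2=c_0+2\sum_i c_i\cos(2\pi j i/n)$ from prior work, bounds the discrete cosine differences by a product-to-sum identity together with $|\sin x|\le\min(1,|x|)$, and separately proves $\max_i c_i=c_0$ — but the underlying smoothness-plus-pigeonhole mechanism is identical.
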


\begin{proof}

The rank of the affine transformation of a convolutional layer is $input\,dimention \times \mathrm{min}(c_{in}, c_{out})$~\citep{sedghi2018singular},
 which in the setting of this lemma is equal to $n$. \citet{boroojeny2024spectrum} showed that if the vectorized form of the kernel is given by $ \textbf{f} = [f_0, f_1, \dots, f_{T-1}]$, then we have the following equation for singular values of this convolutional layer:
%\vspace{-3 pt}
\begin{align}
    s_j^2 = c_0 + 2\sum_{i=1}^{T-1} c_i \Re(\omega^{j\times i}) ,\,\, j=0,1,2, \dots, n-1,
    %\label{equ-eigval}
\end{align}

\noindent where $c_i$'s are defined as:
%\vspace{0 pt}
\begin{align*}
    c_0 &:= f_0^2 + f_1^2 + \dots + f_{T-1}^2,\\
    c_1 &:= f_0f_1 + f_1f_2 + \dots + f_{T-2}f_{T-1},\\
    &\vdots\\
    c_{T-1} &:= f_0f_{T-1}.
\end{align*}

~\noindent and $\omega=\mathrm{exp} (2\pi/n)$ is the basic $n$-th root of unity. The order of $s_j$s (in terms of their magnitude) are unknown apriori and depend on the filter values. Let assume that $\sigma_1 = s_j$ corresponds to the largest singular value. Next, we consider an arbitrary $s_t$ and derive the upperbound for $s_j^2 - s_t^2$:

\begin{align*}
    s_j^2 - s_t^2 = c_0 + 2\sum_{i=1}^{T-1} c_i \Re(\omega^{j\times i}) - c_0 - 2\sum_{i=1}^{T-1} c_i \Re(\omega^{j\times i}) = 2  \sum_{i=1}^{T-1} c_i \left( \Re(\omega^{j\times i}) - \Re(\omega^{j\times i}) \right).
\end{align*}

Now, we need to bound the terms in the summation. We use the fact that $\omega^z = \mathrm{exp}(2z\pi i/n) = \mathrm{cos}\, 2z\pi/n + i \, \mathrm{sin} \, 2z\pi/ n$ and therefore $\Re(\omega^z) = \mathrm{cos}\, 2z\pi/n$:

\begin{align*}
    \Re(\omega^{j\times i}) - \Re(\omega^{j\times i}) &\leq | \Re(\omega^{j\times i}) - \Re(\omega^{j\times i}) | = | \mathrm{cos} \left(\frac{2\pi j\times i}{n}\right) - \mathrm{cos} \left(\frac{2\pi j\times i}{n}\right) | \\
    &= | 2 \mathrm{sin} \left( \frac{(j+t) \pi i}{n} \right) \mathrm{sin} \left( \frac{(t-j) \pi i}{n} \right) | \leq 2 | \mathrm{sin} \left( \frac{(t-j) \pi i}{n} \right) | \\
    &\leq  2 \frac{|t-j| \pi i}{n}, 
\end{align*}

\noindent where the last two last inequalities are due to $sin(x) \leq 1$ and $sin(x) \leq x$, respectively. By using this inequality we can write:

\begin{align*}
    s_j^2 - s_t^2 &= 2  \sum_{i=1}^{T-1} c_i \left( \Re(\omega^{j\times i}) - \Re(\omega^{j\times i}) \right) \leq 2 \sum_{i=1}^{T-1} 2 \, c_i \frac{|t-j| \pi i}{n} \\ 
    &= 4 \frac{|t-j| \pi}{n} \sum_{i=1}^{T-1} c_i \times i \leq 4 \frac{|t-j| \pi}{n} \sum_{i=1}^{T-1} i \times \mathrm{max}_i \, c_i \\
    &= 2 (T-1) T \frac{|t-j| \pi}{n} \times \mathrm{max}_i \, c_i. %\leq 2 \, \frac{\pi \, c_0 \, T^2}{n} |t-j|.
\end{align*}

It is easy to show that $\mathrm{max}_i\, c_i = c_0$; for example, for $c_1$ we can write $2(c_0-c_1) = \sum_{i=0}^{T-2}(f_i - f_{i+1})^2 \geq 0 \implies c_0 \geq c_1$. For $c_i,\, i>1$ a similar justification can be made by only considering the terms that appear in the summands of $c_i$ and using the fact that for the other terms their squared value, which is non-negative, appears in $c_0$. Therefore:

\begin{align*}
    s_j^2 - s_t^2 \leq 2 \, \frac{\pi \, c_0 \, T^2}{n} |t-j|.
\end{align*}

Now considering the set of indices $\mathcal{I} := \{j \pm 1, \dots, j \pm   \floor{p/2} \}$, we know that $s_t,\, t\in \mathcal{I}$ falls within $\,\pi \, c_0 \, T^2 \, p/n$ radius of $s_j$ (which we assumed to represent $\sigma_1$). Therefore, there are at least $|\mathcal{I}|+1=p$ singular values within this radius (including $\sigma_1$). Hence, $\sigma_p$ (the $p$-th largest singular value) should be within this radius as well, and this completes the proof.

\end{proof}

Note that the bound holds for any arbitrary pair of singular values and can be stated in a more general form, as mentioned in Corollary~\ref{cor-anypair}.

\begin{corollary}
\label{cor-anypair}
For the setting of Lemma~\ref{lemma-main}, with sorted singular values $\sigma_1, \dots, \sigma_n$, the following inequality holds:
\begin{align*}
    \sigma_j^2 - \sigma_{j+p}^2 \leq \pi \|\textbf{f}\|_2^2 \, T^2 \frac{p+1}{n}, \,\,\, j,j+p \in [1,\dots,n].
\end{align*}

\end{corollary}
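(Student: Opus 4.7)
I would mirror the proof of Lemma~\ref{lemma-main}, replacing the role of $\sigma_1$ by an arbitrary $\sigma_j$. The only input needed is the Lipschitz-type estimate
\[
  |s_a^2 - s_t^2| \;\leq\; 2\pi\|\mathbf{f}\|_2^2\, T^2\, |t-a|/n ,
\]
which is derived in the proof of Lemma~\ref{lemma-main} and which holds for any two indices $a, t \in \{0,\dots,n-1\}$ (mod $n$).

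\textbf{Step 1 (collect $p+1$ indices close to $\sigma_j$).} Let $a$ be the index with $s_a^2 = \sigma_j^2$ and take the symmetric neighborhood $\mathcal{I} := \{a \pm 1, a \pm 2, \ldots, a \pm \lceil p/2 \rceil\}$ (indices mod $n$). Then $|\mathcal{I} \cup \{a\}| \geq p+1$, and every $t$ in this set satisfies $|t-a| \leq \lceil p/2 \rceil \leq (p+1)/2$, so
\[
  s_t^2 \;\geq\; \sigma_j^2 \,-\, \pi\|\mathbf{f}\|_2^2\, T^2\, \tfrac{p+1}{n} \;=:\; \tau .
\]
Hence the level set $B := \{t : s_t^2 \geq \tau\}$ contains at least $p+1$ indices.

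\textbf{Step 2 (push the count to $j+p$).} The desired inequality is equivalent to $|B| \geq j+p$, since $\sigma_{j+p}^2 \geq \tau$ iff there exist at least $j+p$ indices with $s_t^2 \geq \tau$. Beyond the Step~1 set, $B$ automatically contains $\{a_1, \dots, a_{j-1}\}$, whose singular values exceed $\sigma_j$ and hence $\tau$. I would then enlarge the count by invoking the circular symmetry $s_t^2 = s_{n-t}^2$ (the same symmetry that underlies the roots-of-unity formula for the singular values used in the proof of Lemma~\ref{lemma-main}): the set $B$ is invariant under $t \mapsto n-t$, so every identified index contributes its mirror as well. A short inclusion--exclusion comparing the Step~1 set, the top-rank set $\{a_1, \dots, a_{j-1}\}$, and their reflections then pushes $|B|$ past $j+p$ and yields $\sigma_{j+p}^2 \geq \tau$.

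\textbf{Main obstacle.} The delicate step is the overlap bookkeeping in Step~2, and in particular the axis points $\{0, n/2\}$, where the reflection $t \mapsto n-t$ is a fixed point and hence does not produce a new index. These are the configurations that pin down the factor $(p+1)$ in the bound: a naive Lipschitz argument passing through the sorted sequence would only yield a weaker $2p$-type constant, and it is precisely the circular symmetry (a factor-of-two ``doubling'' of the index count) that closes the gap back down to $(p+1)$.
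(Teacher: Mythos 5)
The paper does not actually supply a proof of Corollary~\ref{cor-anypair}; after Lemma~\ref{lemma-main} it merely remarks that ``the bound holds for any arbitrary pair of singular values.'' So there is nothing in the paper to compare your route against, and your proposal has to be judged on its own. Step~1 is fine: taking $a$ with $s_a^2=\sigma_j^2$ and $\mathcal{I}=\{a\pm 1,\dots,a\pm\lceil p/2\rceil\}$, the Lipschitz estimate $|s_a^2-s_t^2|\le 2\pi\|\textbf{f}\|_2^2 T^2|t-a|/n$ does give $s_t^2\ge\sigma_j^2-\pi\|\textbf{f}\|_2^2T^2(p+1)/n=:\tau$ for all $t\in\mathcal{I}\cup\{a\}$, hence $|B|\ge p+1$. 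This is exactly the move made for $j=1$ in the Lemma, where it suffices because every neighbor of the argmax is automatically \emph{new} (ranked $\ge 2$).

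The gap is in Step~2. For $j>1$ the neighborhood $\mathcal{I}$ need not contribute $p$ \emph{new} indices beyond the top $j$: some or most of $\{a\pm 1,\dots\}$ may already lie in the level set $\mathcal{T}=\{t: s_t^2\ge\sigma_j^2\}$, which has at least $j$ elements and which the trigonometric polynomial $g(\theta)=c_0+2\sum_i c_i\cos(i\theta)$ (degree $T-1$) can spread over several arcs. Reflection $t\mapsto n-t$ does not rescue the count, for two reasons. First, the top-rank set $\mathcal{T}$ is itself invariant under $t\mapsto n-t$ (because $s_t^2=s_{n-t}^2$), so reflecting $\{a_1,\dots,a_{j-1}\}$ adds nothing. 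Second, if $\mathcal{T}$ has several components and the component containing $a$ sits within distance $\lceil p/2\rceil$ of a neighboring component, then the reflected neighborhood $\mathcal{I}'$ around $n-a$ is just the mirror image of the same picture and is absorbed to exactly the same extent; the union $\mathcal{T}_0\cup\mathcal{I}\cup\mathcal{I}'\cup\{a,n-a\}$ can have fewer than $j+p$ elements. Concretely, if $\mathcal{T}$ consists of a symmetric central block with endpoint $a$, a nearby pair of small side blocks at distance $g_1<\lceil p/2\rceil$, and one large remaining gap elsewhere, the elements of $\mathcal{I}$ that escape $\mathcal{T}_0$ number only about $g_1$, its reflection likewise, and the inclusion--exclusion you invoke stalls at roughly $j+2g_1<j+p$. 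So ``a short inclusion--exclusion \dots pushes $|B|$ past $j+p$'' is not a step, it is the unproven claim, and it fails as stated.

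What is missing is that you must extend from \emph{every} boundary point of $\mathcal{T}$, not just from $a$ and $n-a$. Let $\mathcal{T}'=\{t:s_t^2\ge\tau\}$. Every cyclic interval $[a_k,b_k]$ of $\mathcal{T}$ extends in $\mathcal{T}'$ by at least $\lfloor(p+1)/2\rfloor$ indices on each side (by the same Lipschitz bound applied at the endpoints $a_k,b_k$, which satisfy $s_{a_k}^2,s_{b_k}^2\ge\sigma_j^2$). Either some gap of $\mathcal{T}$ remains unfilled, in which case that one gap alone contributes $2\lfloor(p+1)/2\rfloor\ge p$ new indices and $|\mathcal{T}'|\ge|\mathcal{T}|+p\ge j+p$; or all gaps are filled, in which case $|\mathcal{T}'|=n\ge j+p$. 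Either way $\sigma_{j+p}^2\ge\tau$, which is the corollary. Note that this argument uses only cyclicity and the Lipschitz bound; the reflection symmetry $s_t^2=s_{n-t}^2$ is not what closes the factor down to $p+1$ — the two-sided extension at \emph{each} endpoint of the level set is.
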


Now, using Corollary~\ref{cor-anypair}, we can easily prove~\Cref{theorem-ortho} and show that for two convolutional layers, orthogonalization of
only the singular vector corresponding to their largest singular values has a similar orthogonalization
effect for the remaining singular vectors and this effect increases for the top singular vectors.

\begin{proof} {\bf (\Cref{theorem-ortho}).}
    Since $M_1$ and $M_2$ are convolutional layers with circular padding and one input channel and output channel, their linear transformation can be represented by circulant matrices of rank $n$ (dimension of the input), $A_1$ and $A_2$~\citep{goodfellow2014explaining}. Singular vector matrix of any circulant matrix is equal to the Fourier matrix of size $n$, and the singular vector corresponding to singular value $s_j$ from unordered list of singular values in Lemma~\ref{lemma-main} can be written as $\frac{1}{\sqrt{n}}[\omega^{j\times 0},\omega^{j\times 1},\dots, \omega^{j\times (n-1)}]^T$~(\cite{gray2006toeplitz} Theorem 3.1). Therefore, the singular vectors of $A_2$ (i.e., $v_i^\prime$s) are simply a different ordering of the singular vectors of $A_1$ (i.e., $v_i$s). Hence, each $\|A_1 v_i^\prime\|_2$ is equal to some singular value $s_j$ of $A_1$. Assuming that $\|A_1 v_1^\prime\|_2 = s_j$ and $\|A_1 v_i^\prime\|_2 = s_{j+i-1}$, by making $\|A_1 v_1^\prime\|_2 \leq \epsilon$ and , using Corollary~\ref{cor-anypair}, we concolude the inequality of interest.
\end{proof}

%\label{theorem:concentration}

\section{Experiments}

In this section, we first provide more details on the setting of our experiments (\Cref{sec:setup}) and explain how we control for the effect of batch norm layers in our experiments (\Cref{sec:bn}). We show how \texttt{LOTOS} effectively performs the orthogonalization among the layers of the models in an ensemble in~\Cref{apx:efficacy} and present the results on three other ablations studies on our proposed method in~\Cref{sec:additional_ablation}. Please contact the authors for the code (implementation of \texttt{LOTOS} and the experiments).

\subsection{Setup}
\label{sec:setup}

In this section, we elaborate on the setup of our experiments.

\noindent{\bf Compute Infrastructure:} We used NVIDIA A40 GPUs for our experiments except for the experiments in \S~\ref{subse:prior} that involved training with \texttt{TRS} method where we used NVIDIA A100 GPUs. Using 32GB of RAM was enough to perform our experiments.

\noindent{\bf Datasets and Models:} In all the experiments for evaluating the efficacy of our model, either in isolation or in combination with prior methods, we use both CIFAR-10 and CIFAR-100 datasets~\citep{krizhevsky2009learning}. The models we use in these experiments consist of ResNet-18~\citep{he2016deep}, ResNet-34 (in the experiment on heterogeneous ensembles), and DLA~\citep{yu2018deep}\footnote[1]{According to https://github.com/kuangliu/pytorch-cifar they achieve superior results on CIFAR-10.}. For more exploratory ablation studies (e.g., effect of the parameter $k$ and ensemble size) we limit the experiments to ResNet-18 models on CIFAR-10. 

\noindent{\bf Attack Details:} We use both black-box attacks and white-box attacks in our experiments. 

$\circ$ The white-box attack is used to evaluate the transferability rate of adversarial examples between the models in the ensemble; for each ordered pair of the models in the ensemble, the first one is used as the source model to generate the adversarial examples and then the transferability rate of the generated adversarial examples is evaluated on the second model (target model) using Definition~\ref{def:trans}. The final transferability rate for the ensemble is the average of the transferability rate for all the ordered pairs of the models. To make the results more accurate, we repeat this for $3$ different ensembles trained from scratch with different random seeds and report the average values.

$\circ$ For the black-box attacks, an independently trained source (surrogate) model (of the same type as the models in the ensemble) is used to generate the adversarial examples; we then measure the robust accuracy of the ensembles against these adversarial examples i.e., robust accuracy is the accuracy on the adversarial samples for which the model correctly predict the original versions. We use both the original models and their clipped versions as the surrogate models to generate adversarial examples. To make the results more accurate, we train $3$ source models with different random seeds (for each of the two types of surrogate models) and $3$ ensembles with different random seeds and compute the average of the robust accuracies over the $18$ different cases of choosing a source model and a target ensemble. For both white-box attacks and black-box attacks we use PGD-50~\citep{madry2017towards} attack to generate the adversarial examples.

\subsection{Batch Normalization Layers}
\label{sec:bn}

Controlling the Lipschitz constant of the models is more complicated in the presence of batch norm layers. For analyzing the properties of the Lipschitz models, some prior works ignore the batch norm layers~\citep{miyato2018spectral} and some remove them from the models~\citep{sedghi2018singular}. Some other works modify the parameters of the batch norm directly to ensure Lipschitzness~\citep{gouk2021regularisation,senderovich2022towards,delattre2023efficient}.~\citet{boroojeny2024spectrum} show that batch norm layers show a compensating behavior when the spectral norm of it preceding convolutional layer is controlled so ignoring them does not help with bounding the Lipschit constant of the model. They also show that modifying the parameters directly leads to poor training and test accuracy. They instead propose an approximate method for controlling the Lipschitz constant of the composition of the convolutional and batch norm layers which works better in practice but still are not exact and as accurate. Therefore, for analyzing the true effect of the Lipschitz constant on transferability rate and ablation studies we perform the experiments on models without their batch norm layers, in addition to performing the same experiments on the efficacy of our method in increasing the robustness of ensembles of the original models (including their batch norm layers) when using the approximate methods for controlling the Lipschitz constant in the presence of batch norm layers.

\subsection{Efficacy of LOTOS (cont.)}
\label{apx:efficacy}

As mentioned previously, the goal of the orthogonalization loss of \texttt{LOTOS} in~\Cref{equ:LOTOS-loss} is to keep the size of the output of a layer from each of the models in the ensemble when applied to the largest singular vector of the corresponding layer from the other models below the chosen \texttt{mal} value. This goal has to be achieved while enforcing the spectral norm of each layer to the target clipping value. In this section, we look at some of the layers of the ResNet-18 models in an ensemble that is trained with \texttt{LOTOS} on CIFAR-10, to see how the aforementioned values change during the training of the model. We randomly choose three of the layers from different parts of the models and evaluate them in the course of training for two different chosen values for \texttt{mal}. Figure~\ref{fig:mal_0.5} shows the results when $C=1.0$ and $\texttt{mal} = 0.5$. Each row shows the results for one specific layer from the three models. The leftmost subfigure shows the spectral norm of that layer for each of the models. As the figure shows, the clipping method effectively enforces the spectral norm to be almost $1$ and therefore makes that layer $1$-Lipschitz. The next three subfigures, each show the output of that layer from one of the three models when applied to the largest singular vectors of the corresponding layer from the other two models. As the plots show, these values are effectively controlled to be less than the chosen \texttt{mal} value. 

Figure~\ref{fig:mal.01} shows the results for the same three layers when the ensemble is trained with $\texttt{mal}=0.01$. As the plots show the spectral norm is effectively controlled to be almost $1$, while the output of the layers on the largest singular vectors of the other layers is made much smaller than in the previous case. However, note that for some of these layers, this size might increase up to $0.1$, which is larger than the \texttt{mal} value and the reason is the specific structure of the convolutional layers which does not allow them to accept arbitrary spectrums as noticed by prior work~\citep{boroojeny2024spectrum}.

\begin{figure}
    \centering
    \begin{subfigure}{.99\linewidth}
        \centering
        \includegraphics[width=.99\linewidth]{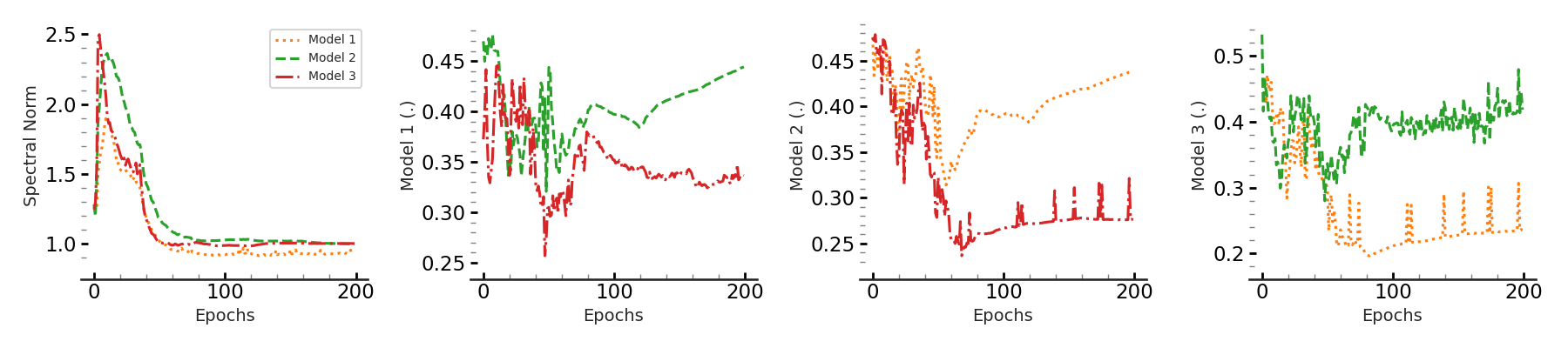}
        %\caption{Image}\label{fig:image1}
    \end{subfigure} %

    \hfill

    \begin{subfigure}{.99\linewidth}
        %\centering
        \includegraphics[width=.99\linewidth]{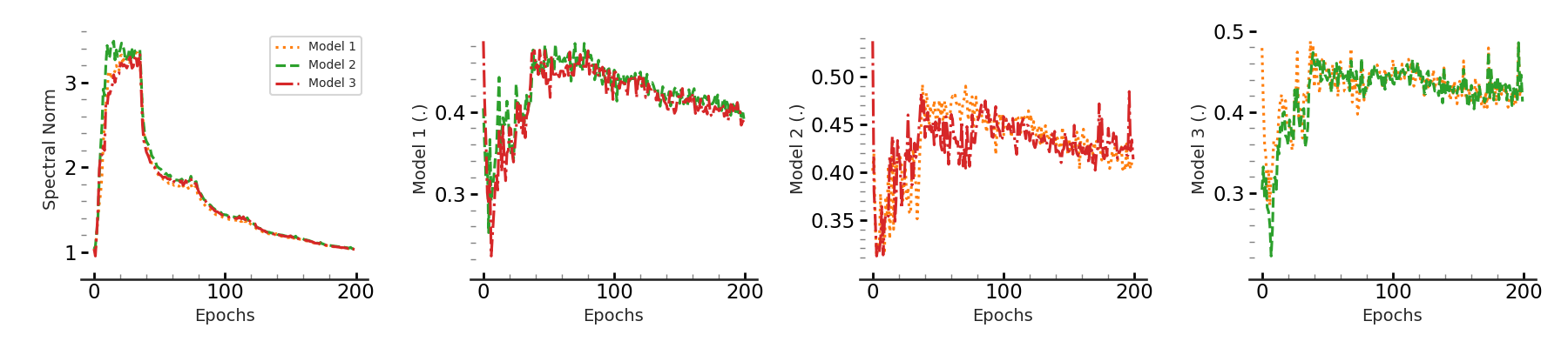}
        %\caption{Image}\label{fig:image12}
    \end{subfigure}

    \hfill

    \begin{subfigure}{.99\linewidth}
        \centering
        \includegraphics[width=.99\linewidth]{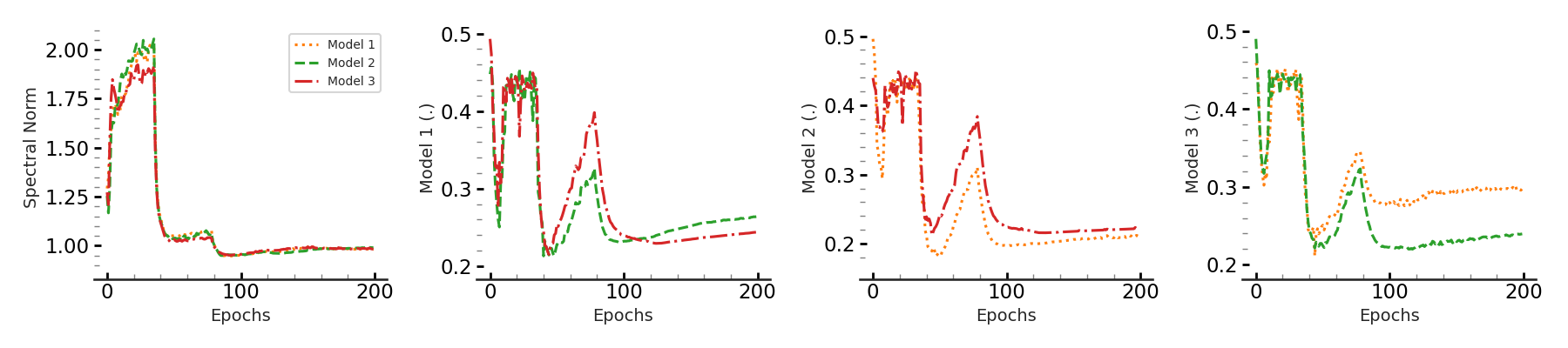}
        %\caption{Image}\label{fig:image12}
    \end{subfigure}

    \caption{\footnotesize Each row represents the results for a specific layer from the three ResNet-18 models in an ensemble trained with \texttt{LOTOS} with $C=1$ and $\texttt{mal} = 0.5$. The leftmost subfigure shows the spectral norm of that layer for each model which is enforced to be $1$-Lipschitz. The next three subplots show the size of the outputs of that layer from each model when applied to the largest singular vectors of the corresponding layer from the other models; {\bf \texttt{LOTOS} effectively keeps these values below the chosen value for \texttt{mal}}. 
    \label{fig:mal_0.5}}
    %\vskip -0.2in
\end{figure}

\begin{figure}[t!]
    \centering
    \begin{subfigure}{.99\linewidth}
        \centering
        \includegraphics[width=.99\linewidth]{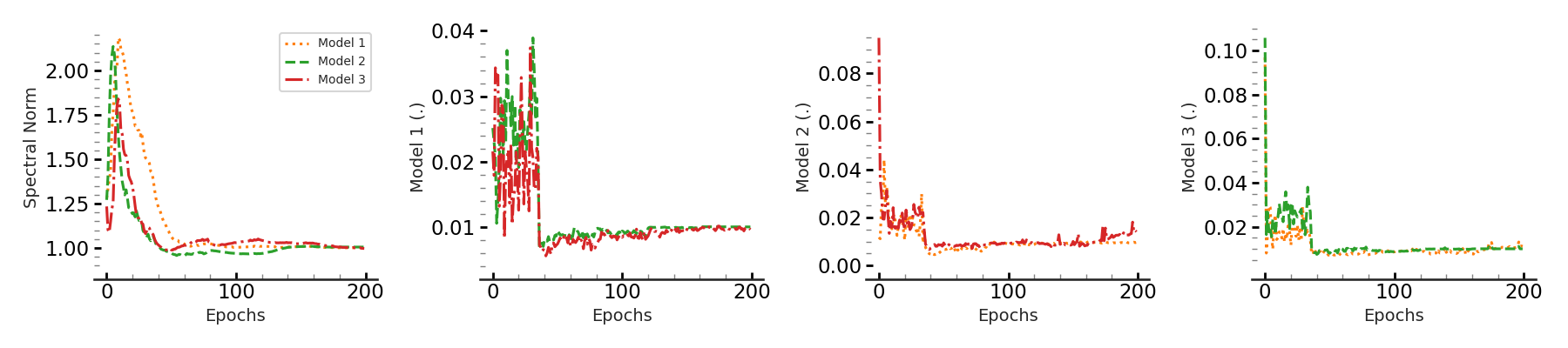}
        %\caption{Image}\label{fig:image1}
    \end{subfigure} %

    \hfill

    \begin{subfigure}{.99\linewidth}
        %\centering
        \includegraphics[width=.99\linewidth]{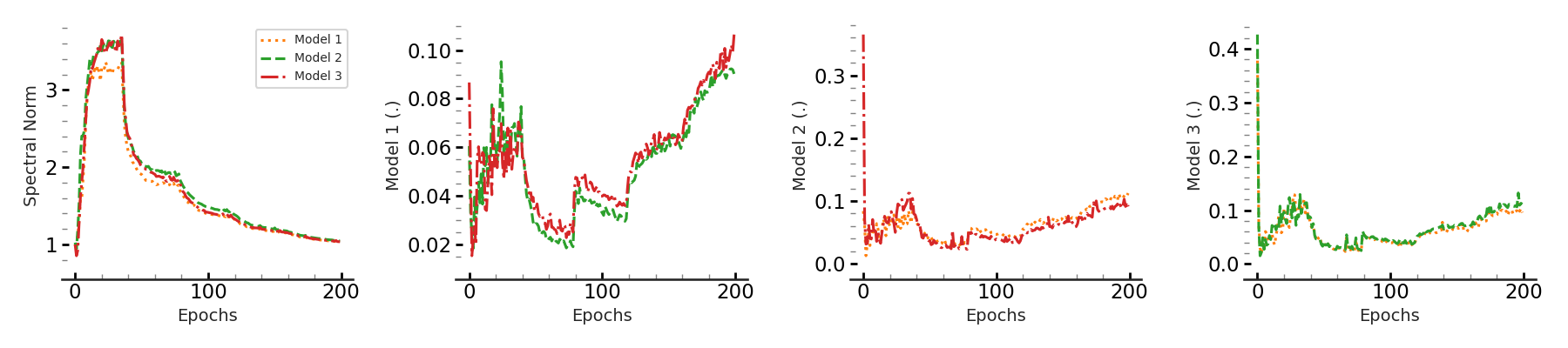}
        %\caption{Image}\label{fig:image12}
    \end{subfigure}

    \hfill

    \begin{subfigure}{.99\linewidth}
        \centering
        \includegraphics[width=.99\linewidth]{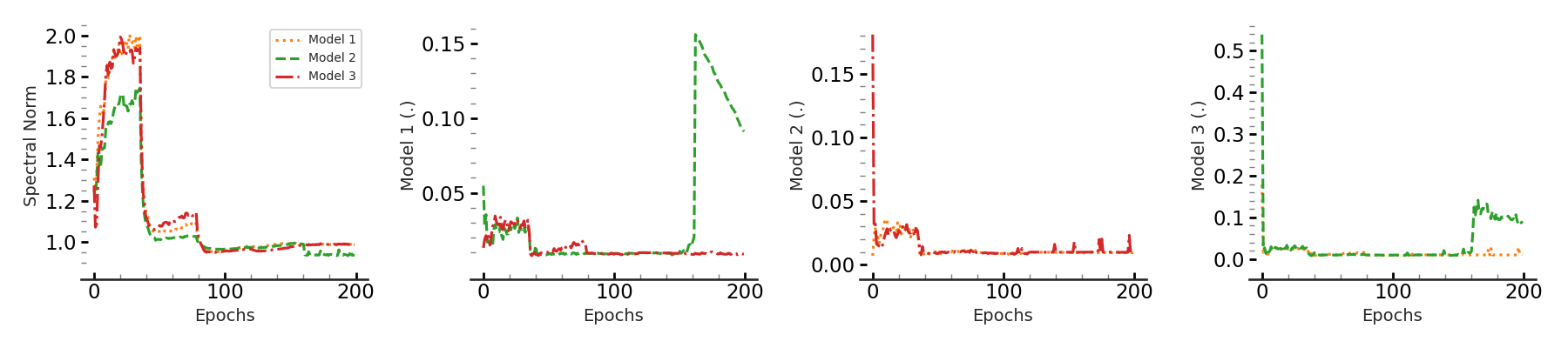}
        %\caption{Image}\label{fig:image12}
    \end{subfigure}

    \caption{\footnotesize Each row represents the results for a specific layer from the three ResNet-18 models in an ensemble trained with \texttt{LOTOS} with $C=1$ and $\texttt{mal} = 0.01$. The leftmost subfigure shows the spectral norm of that layer for each model which is enforced to be $1$-Lipschitz. The next three subplots show the size of the outputs of that layer from each model when applied to the largest singular vectors of the corresponding layer from the other models; {\bf \texttt{LOTOS} effectively makes these values much smaller but because the limitations of convolutional layers, they might not become smaller than the \texttt{mal} value in some cases}. 
    \label{fig:mal.01}}
    %\vskip -0.2in
\end{figure}

\subsection{Running Time}
\label{sec:time}

In \S~\ref{sec:efficiency_LOTOS}, we saw that the computation of orthogonalization loss in \texttt{LOTOS} is the same as running each model on $N-1$ additional batch. Table~\ref{tab:time} shows a comparison between the training times of ensembles of three ResNet-18 or DLA models on an NVIDIA A40 GPU. We report the time per epoch for training ensembles without clipping, with clipping, and with \texttt{LOTOS}. For the latter two versions, we also report the increased factor in time compared to the training time of the regular ensembles. These times are shown in three groups: 1. when no additional robustification method is used, 2. when  TRS~\citep{yang2021trs} is also used for training the ensembles, and 3. when Adversarial Training (Adv)~\citep{madry2017towards} is used in training of each individual model in the ensemble. As the table shows the time increase caused by the orthogonalization loss in \texttt{LOTOS} (see~\Cref{equ:LOTOS-loss}) is negligible while being very effective in diversification among the models and decreasing the transferability rate among them.

\begin{table*}[th!]
\begin{center}
\begin{small}
\begin{sc}
\resizebox{\textwidth}{!}{
\begin{tabular}{@{} l  c c  c  c c c c c c @{}}
 \toprule

 % \midrule   
 % & \multicolumn{3}{@{}c}{\textbf{Orig}} & \multicolumn{3}{@{}c}{\textbf{Clip}} \\\addlinespace[0.3em]

 \multicolumn{1}{c}{\scriptsize \textbf{}} & 
 \multicolumn{1}{c}{\scriptsize \texttt{Orig}} & 
 \multicolumn{1}{c}{\scriptsize $C=1$} & 
 \multicolumn{1}{c}{\scriptsize \texttt{LOTOS}} & 
 \multicolumn{1}{c}{\scriptsize TRS} & 
 \multicolumn{1}{c}{\scriptsize TRS + $C=1$} & 
 \multicolumn{1}{c}{\scriptsize TRS + \texttt{LOTOS}} &
 \multicolumn{1}{c}{\scriptsize Adv} & 
 \multicolumn{1}{c}{\scriptsize Adv + $C=1$} & 
 \multicolumn{1}{c}{\scriptsize Adv + \texttt{LOTOS}} 
 % &  $N=3$ & $N=9$ & {\footnotesize Improvement}  
 \\\addlinespace[0.3em]

 \cmidrule(r){2-4}
 \cmidrule(r){5-7}
 \cmidrule(r){8-10}
 % \cmidrule(r){5-7}

 % & \multicolumn{4}{@{}c}{Accuracy on the test set}   \\ \addlinespace[0.4em]%[0.5ex] 
 % ResNet18 (noBN) & $28.0 $ & $43.4 $ & $44.9 $ & $91.3 $ & $107.7 $ & $108.9 $

 % \\\addlinespace[0.3em]

 ResNet-18 & $33.2  $ & $74.9 $ {\tiny $\times 2.3$} & $79.3$ {\tiny $\times 2.4$}& $158.2  $ & $224.4$ {\tiny $\times 1.4$}  & $227.3$ {\tiny $\times 1.4$}  & $312.6$ & $479.2$ {\tiny $\times 1.5$} & $485.2$ {\tiny $\times 1.5$}
 % Orig & $19.3 \pm 1.34$ & $19.8 \pm 1.90$ & $2.5\%$
 % & $13.6 \pm 0.89$ & $14.2 \pm 0.79$ & $4.4\%$   
 \\\addlinespace[0.3em]

 DLA  & $63.1$  & $155.4$ {\tiny $\times 2.5$} & $165.4$ {\tiny $\times 2.6$} & $326.2 $ & $466.1$ {\tiny $\times 1.4$} & $477.4$ {\tiny $\times 1.5$} & $758.6.2$ & $942.5$ {\tiny $\times 1.2$} & $949.2$ {\tiny $\times 1.2$}
 % $C=1.0$ & $39.0 \pm 0.85$  & $40.1 \pm 1.04$  & $2.8 \%$
 % &  $13.9 \pm 0.30$ &  $ 15.0 \pm 0.86$ & $7.9\%$ 
 \\\addlinespace[0.3em]

 % \cmidrule(r){1-4}                        
 \bottomrule
\end{tabular}
}
\end{sc}
\end{small}
\end{center}
\caption{\footnotesize {\bf }Time (in seconds) per epoch for training ensembles of three ResNet-18 models using either of the methods investigated in this paper. These values are computed on an NVIDIA A40 GPU. As the table shows the orthogonalization loss used in \texttt{LOTOS} makes a negligible change in the computation time of the ensembles with clipped models ($C=1$). \vspace{-3mm}}
\label{tab:time}

\end{table*}

\subsection{Additional Ablation Results}
\label{sec:additional_ablation}
In this section we present the results on three other ablations studies (in addition to the ablation studies presented in \S~\ref{exp:ablation}) on our proposed method; First, we show the importance of using singular vectors of the other layers in~\Cref{equ:sk} by comparing the results when they are replaced by random vectors (\Cref{sec:rand}). We then investigate the effect of changing \texttt{mal} value in~\Cref{equ:LOTOS-loss} in the transferability rate (\Cref{sec:var-mal}). Finally, we investigate the effect of changing the clipping value of the source model in black-box attacks (\Cref{sec:var-L}).

\subsubsection{Largest Singular Vector vs. Random Vectors}
\label{sec:rand}

\texttt{LOTOS} orthogonalizes the corresponding layers of the models by penalizing the size of the output of each layer when it gets the largest singular vectors of the corresponding layers in the other models as its inputs. To show that the decrease in the transferability rate presented in \S~\ref{exp:layer-wise_ortho} is indeed the result of orthogonalization of the subspaces spanned by the top singular vectors, we perform an experiment in which we use randomly chosen vectors in~\Cref{equ:sk} instead. As~\Cref{fig:var-mal} shows, using the transferability rate when random vectors are used is similar to the clipped model without \texttt{LOTOS}. The use of random vectors for the models still causes slight decrease in the transferability rate as it introduces new random differences in the training of the models, but not as much as using the top singular vectors which leads to orthogonalization of the models with respect to one another.

\begin{figure}[t!]
\centering
\begin{subfigure}{.52\textwidth}
    \centering
    \includegraphics[width=.99\linewidth]{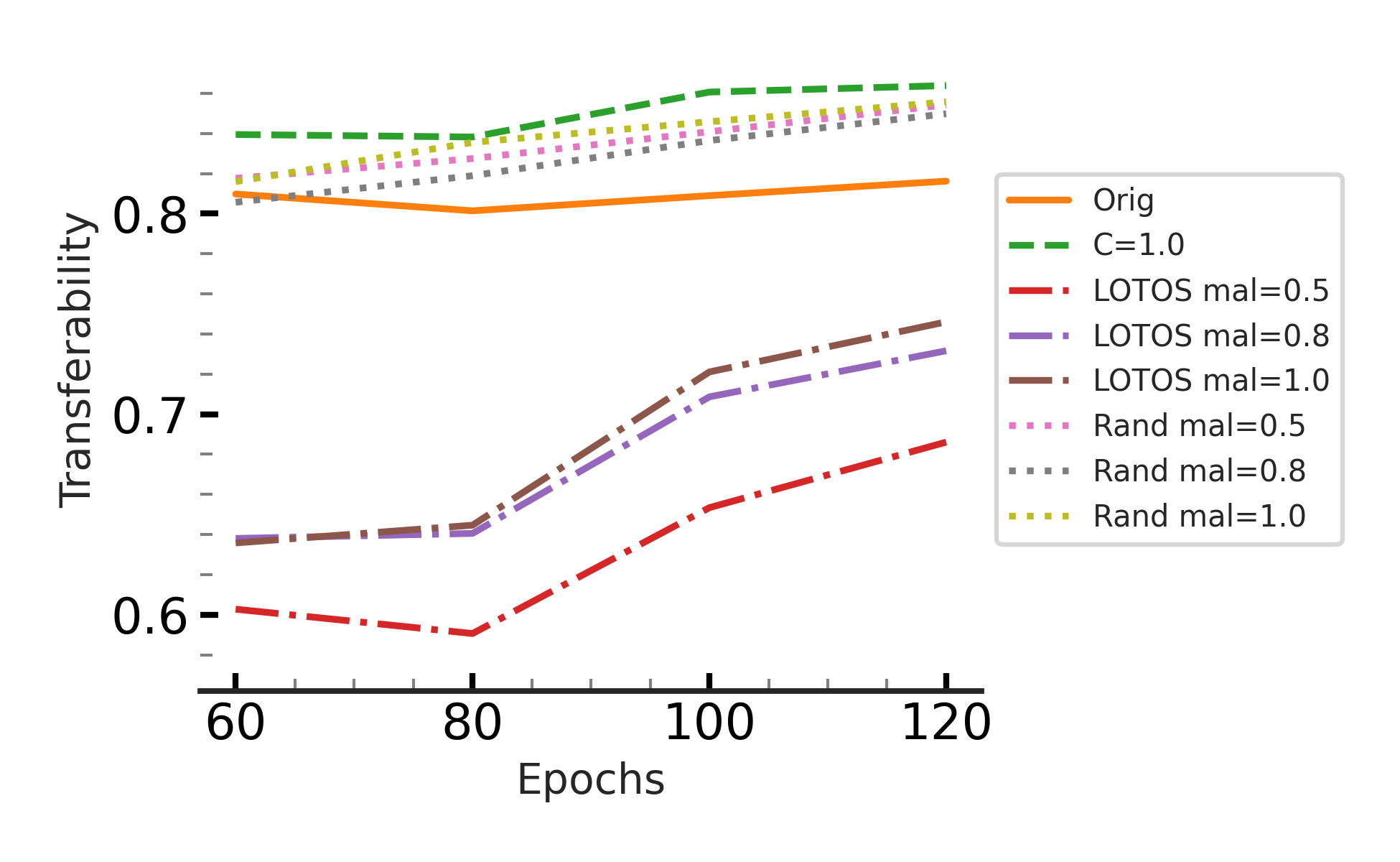}  
\end{subfigure}
\caption{\footnotesize  {\bf Do random vectors work?} We use random vectors in~\Cref{equ:sk} instead of the largest singular vectors to verify the effect of orthogonalization on decreasing the transferability. As the plot shows the ensembles that use random vectors rather than the largest singular vectors of the layers of other models perform similarly to the clipped model without \texttt{LOTOS} ($C=1.0$). It still shows slight improvement because using different random vectors still provides some diversity among the models but not as much as using the top singular vectors. } %\varun{make caption shorter?}}
%\vspace{-3mm}
\label{fig:var-mal}
\end{figure}

\subsubsection{Effect of \texttt{mal}}
\label{sec:var-mal}

In~\Cref{equ:LOTOS-loss}, by decreasing the maximum allowed length (\texttt{mal}) for the size of the output of a layer when top singular vectors of the other layers are given as input, we enforce a higher degree of orthogonalization, and therefore expect to see a more decrease in the transferability rate among the models. As~\Cref{fig:ablation} (Right) and~\Cref{fig:var-mal} show, that is indeed the case. However, based on our experiments, decreasing \texttt{mal} to very low values, decreases the robustness and accuracy of individual models. We found the value of $0.8$ to be a good trade-off between the two for increasing the robustness of ensembles and used that for our experiments.

\subsubsection{Changing the Lipschitz Constant of the surrogate Model}
\label{sec:var-L}

 In this experiment, we evaluate the effect of Lipschitz constant of theh surrogate model on the effectiveness of its adversarial examples on the target ensembles in a black-box attack. We clip the spectral norm of each layer of each of the source models to a specific value and evaluate the average robust accuracy of each of the three models. We try different layer-wise clipping values ($0.8$, $1.0$, $1.2$, and $1.5$) and for each setting compute the average of the robust accuracy over multiple random seeds. As the table shows the adversarial examples generated on the clipped models are more effective in black-box attacks, but still \texttt{LOTOS} has the highes robust accuracy compared to others.

\begin{table*}[ht!]
\begin{center}
\begin{small}
\begin{sc}
\begin{tabular}{@{} l  c c  c    @{}}
 \toprule

   &  \texttt{Orig} & $C=1.0$ & \texttt{LOTOS}
 \\\addlinespace[0.3em]

    \cmidrule(r){2-4}

 \texttt{Orig} & $19.3 \pm 1.34$ & $39.0 \pm 0.85$ & $\bf 43.8 \pm 1.30$   
 \\\addlinespace[0.3em]

 $C=0.8$ & $15.7 \pm 1.03$  & $12.9 \pm 0.64$ & $\bf 18.5 \pm 0.79$ 
 \\\addlinespace[0.3em]

 $C=1.0$ & $13.6 \pm 0.89$  & $13.9 \pm 0.30$ &  $\bf 20.5 \pm 0.69$
 \\\addlinespace[0.3em]

 $C=1.2$ & $13.2 \pm 0.96$  &  $15.9 \pm 0.69$  &  $\bf 22.9 \pm 0.91$
 \\\addlinespace[0.3em]

 $C=1.5$ & $13.1 \pm 0.65$ & $19.8 \pm 0.29 $ & $\bf 27.1 \pm 0.66$ 
 \\\addlinespace[0.4em] 

\bottomrule
\end{tabular}
\caption{\footnotesize {\bf Robust accuracy against black-box attacks.} The source models are either original models or clipped models with different layer-wise clipping value for the spectral norm. The target models are ensembles of three ResNet-18 models on CIFAR-10 for three different cases; original models (\texttt{Orig}), clipped ones ($C=1.0$), and trained with \texttt{LOTOS}. As the table shows attacks using clipped models are stronger but still \texttt{LOTOS} achieves the highest robust accuracy. }
\label{tab:nobn-cifar}
\end{sc}
\end{small}
\end{center}
\vskip -0.1in
\end{table*}

\end{document}